\documentclass[conference,compsoc]{IEEEtran}
\IEEEoverridecommandlockouts
\usepackage{caption}
\usepackage{subcaption}
\usepackage{graphicx}
\usepackage{mathtools}
\usepackage{pifont}        
\usepackage{adjustbox}     
\usepackage{makecell}      
\usepackage{booktabs}      
\usepackage{tabularx}      
\usepackage{ragged2e}      

\usepackage{xcolor}

\usepackage{marvosym}
\usepackage{makecell}

\newcommand{\cmark}{\ding{51}}
\newcommand{\xmark}{\ding{55}}
\usepackage{amsmath}
\usepackage{amssymb}
\usepackage{amsthm}
\usepackage{bm} 
\usepackage{threeparttable}
\usepackage{algorithm}
\usepackage{algpseudocode}
\usepackage{algorithmicx}
\usepackage{pifont}
\usepackage{booktabs}
\usepackage{array}
\usepackage{makecell}
\usepackage{multirow}
\usepackage{cite}
\newcommand{\ie}{\textit{i.e\@.},\ }
\newcommand{\eg}{\textit{e.g\@.},\ }
\usepackage{hyperref}
\newtheorem{theorem}{Theorem}

\newcommand{\para}[1]{\vspace{2pt}\noindent{\textbf{#1}}\hspace{10pt}\vspace{0.1pt}}
\newenvironment{icompact}{
	\begin{list}{$\bullet$}{
			\itemindent -.05em
			\parsep 0pt plus 1pt
			\partopsep 0pt plus 1pt
			\topsep 2pt plus 2pt minus 2pt
			\itemsep 0pt plus 1.3pt
			\parskip 0pt plus 2pt
			\leftmargin 0.13in}
	}
	{\normalsize
	\end{list}
}

\ifCLASSINFOpdf
\else
\fi

\begin{document}\sloppy
	
\title{Toward Efficient Membership Inference Attacks against \\ Federated Large Language Models: A Projection Residual Approach}

\author{
	\IEEEauthorblockN{Guilin Deng\textsuperscript{*}, Silong Chen\textsuperscript{*}, Yuchuan Luo\textsuperscript{\Letter}, Yi Liu\textsuperscript{\Letter}\IEEEauthorrefmark{2}, Songlei Wang\IEEEauthorrefmark{3}, \\ Zhiping Cai, Lin Liu, Xiaohua Jia\IEEEauthorrefmark{2}, Shaojing Fu}
	\IEEEauthorblockA{Colleague of Computer Science and Technology, National University of Defense Technology, Changsha, China}
	\IEEEauthorblockA{\IEEEauthorrefmark{2} City University of Hong Kong, Hong Kong, China}
	\IEEEauthorblockA{\IEEEauthorrefmark{3} Shenzhen University, Shenzhen, China}
	\IEEEauthorblockA{\{dengguilin, chensilong, luoyuchuan09\}@nudt.edu.cn}
}

\maketitle
	
\renewcommand{\thefootnote}{*}
\makeatletter
\long\def\@makefntext#1{%
	\parindent 1em\noindent
	\hb@xt@1.8em{\hss\@thefnmark}%
	\ #1}
\makeatother
\footnotetext[1]{Equal contribution}
	
\begin{abstract}
		
Federated Large Language Models (FedLLMs) enable multiple parties to collaboratively fine-tune LLMs without sharing raw data, addressing challenges of limited resources and privacy concerns. Despite data localization, shared gradients can still expose sensitive information through membership inference attacks (MIAs). However, FedLLMs’ unique properties, \ie massive parameter scales, rapid convergence, and sparse, non-orthogonal gradients, render existing MIAs ineffective. To address this gap, we propose \texttt{ProjRes}, the first projection residuals-based passive MIA tailored for FedLLMs. \texttt{ProjRes} leverages hidden embedding vectors as sample representations and analyzes their projection residuals on the gradient subspace to uncover the intrinsic link between gradients and inputs. It requires no shadow models, auxiliary classifiers, or historical updates, ensuring efficiency and robustness. Experiments on four benchmarks and four LLMs show that \texttt{ProjRes} achieves near 100\% accuracy, outperforming prior methods by up to 75.75\%, and remains effective even under strong differential privacy defenses. Our findings reveal a previously overlooked privacy vulnerability in FedLLMs and call for a re-examination of their security assumptions. Our code and data are available at the available at \href{https://anonymous.4open.science/r/Passive-MIA-5268}{link}.
		
\end{abstract}

\IEEEpeerreviewmaketitle
	
\section{Introduction}
Large Language Models (LLMs) such as GPT \cite{GPT2}, BERT \cite{BERT}, Llama3 \cite{llama3}, and Qwen \cite{qwen2.5} have attracted substantial attention for their remarkable capabilities across a wide range of natural language understanding and generation tasks. Fine-tuning pre-trained LLMs on domain-specific datasets has become the standard approach for adapting these models to downstream applications~\cite{adapter_Heterogeneous}. However, the vast parameter scales of modern LLMs result in prohibitively high computational and time costs during fine-tuning \cite{adapter_efficent}, posing significant barriers for users with limited resources. This exacerbates the technological divide and hinders widespread adoption. In addition, the domain-specific corpora required for fine-tuning are often difficult to collect and are typically distributed across multiple institutions~\cite{federatedscope}. Due to stringent data privacy regulations such as the General Data Protection Regulation (GDPR)~\cite{GDPR}, these institutions are prohibited from directly sharing sensitive data, further complicating collaborative model development and deployment.
	
To address the two key challenges outlined above, Federated Large Language Models (FedLLMs)~\cite{openfedllm, FedLLM_future, FedLLM_FL2} provide a promising solution by enabling multiple participants integrate computational resources to collaboratively train models without directly sharing their local data. Instead, participants exchange model gradients or parameter updates, allowing for joint learning while preserving data privacy~\cite{federatedscope}. In a typical FedLLM setting, a central server distributes a pre-trained LLM to participating clients, who perform local fine-tuning using their private datasets and computational resources. The locally updated parameters are then sent back to the server for aggregation, thereby mitigating both data isolation and computational resource constraints. Moreover, the emergence of Parameter-Efficient Fine-Tuning (PEFT)~\cite{adapter_Dual-Personalizing, adapter, adapter_DP-DyLoRA} techniques has greatly enhanced the practicality of FedLLMs by substantially reducing communication and storage overhead during training. As a result, FedLLMs have gained significant traction in both academia and industry, with applications expanding rapidly across diverse domains such as biomedical analysis~\cite{biomedical1, biomedical2}, legal reasoning~\cite{legal1}, weather forecasting~\cite{weather1}, code generation~\cite{code1, code3}, and software engineering~\cite{code2}.

\begin{figure*}[t]
		\centering
		\begin{subfigure}[t]{0.23\textwidth}
			\includegraphics[width=\textwidth]{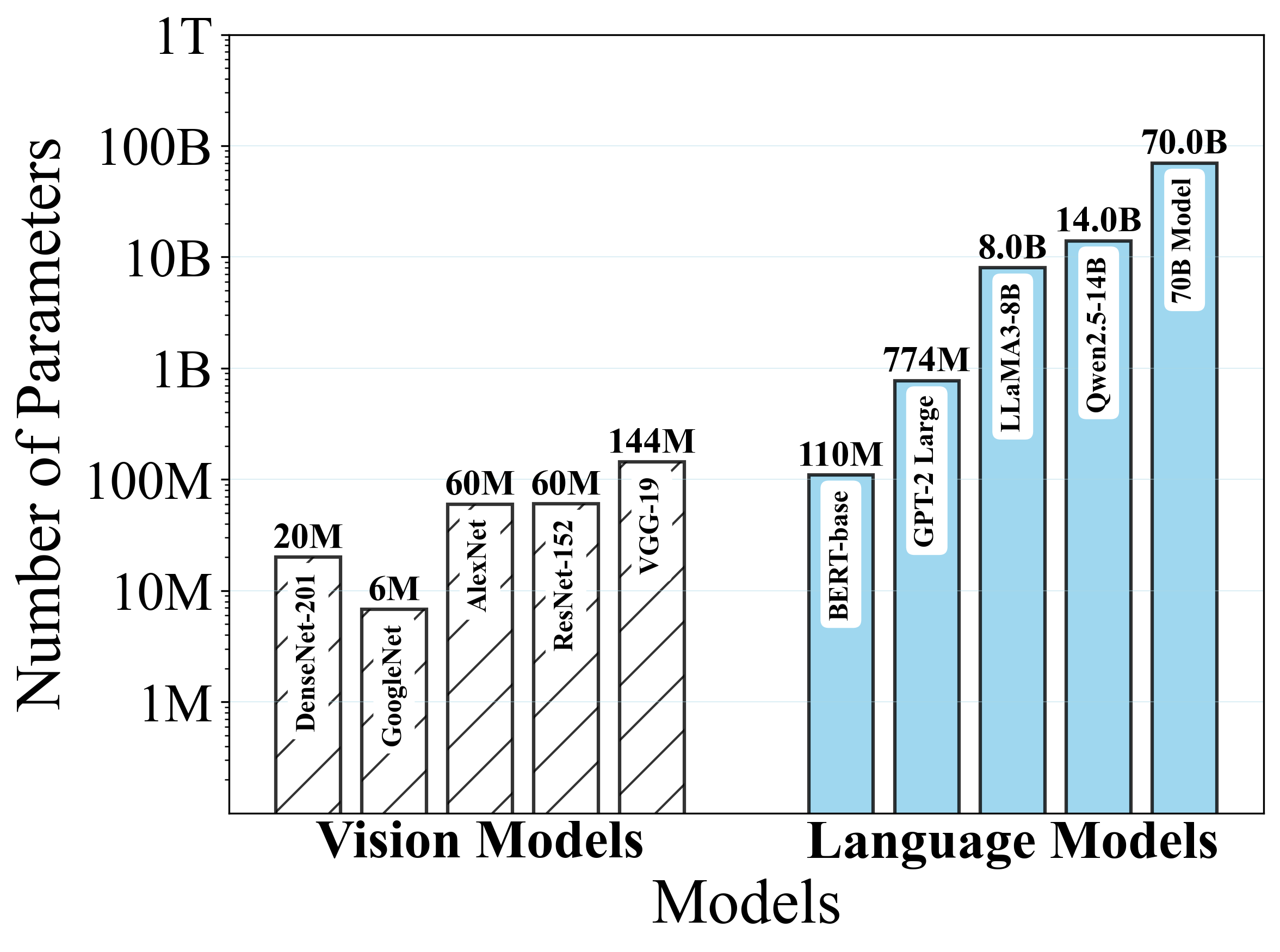}
			\caption{CV models vs. LLMs.}
			\label{fig:model_parameter}
		\end{subfigure}
		\hfill
		\begin{subfigure}[t]{0.23\textwidth}
			\includegraphics[width=\textwidth]{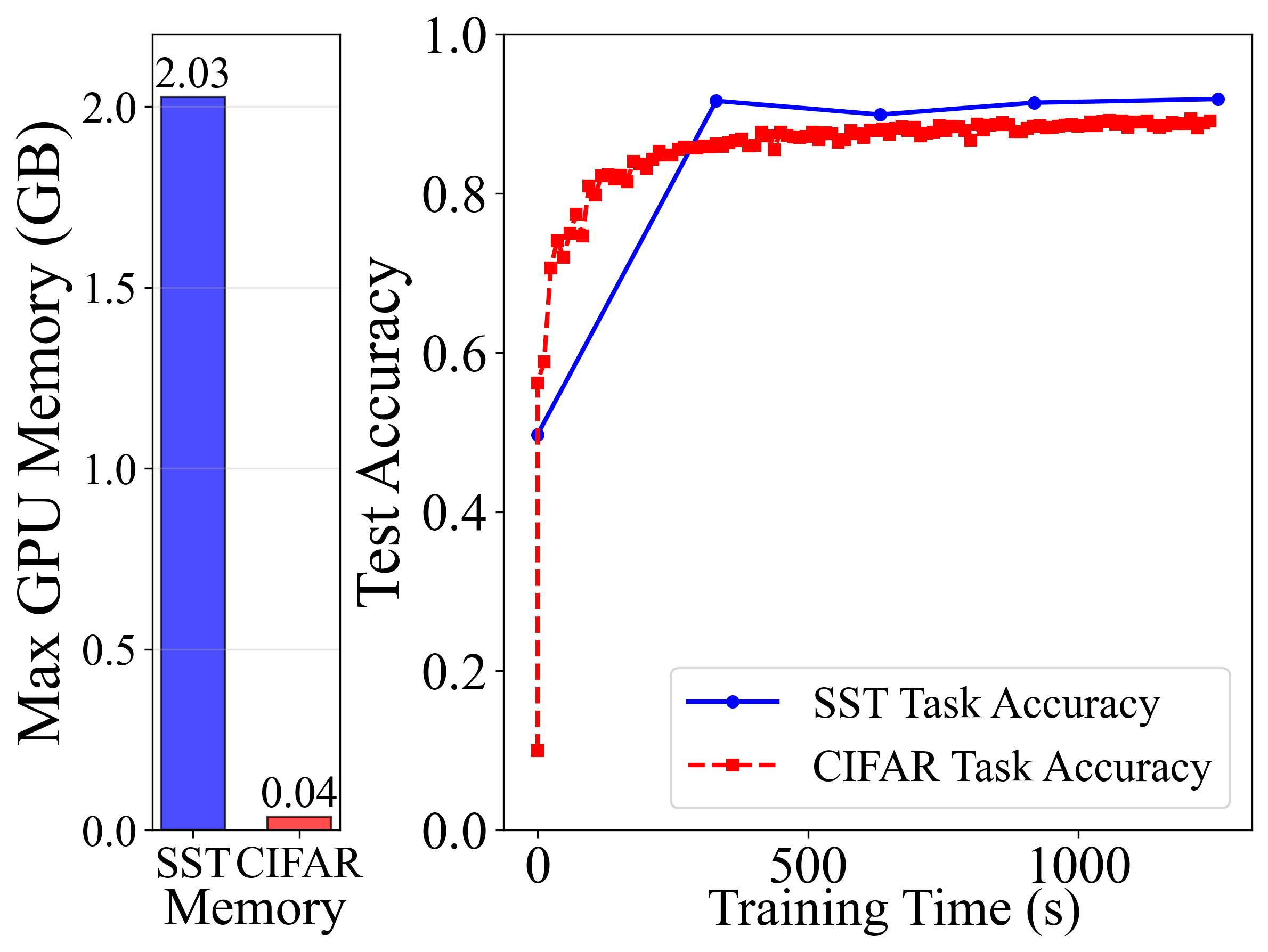} 
			\caption{Resource overhead.}
			\label{fig:convergence_time}
		\end{subfigure}
		\hfill
		\begin{subfigure}[t]{0.23\textwidth}
			\includegraphics[width=\textwidth]{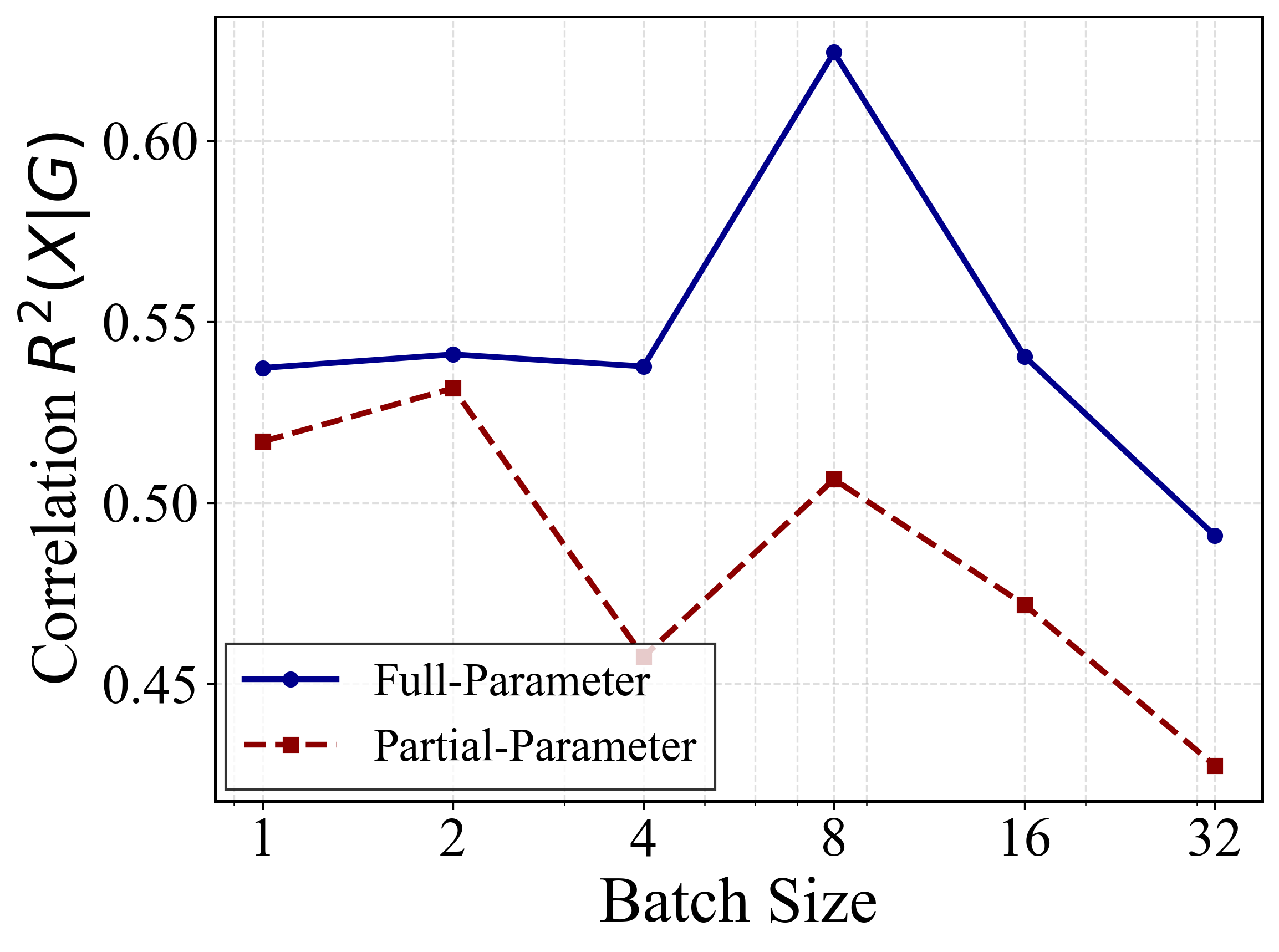}
			\caption{Gradient correlation.}
			\label{fig:grad_corr_batch}
		\end{subfigure}
		\hfill
		\begin{subfigure}[t]{0.23\textwidth}
			\includegraphics[width=\textwidth]{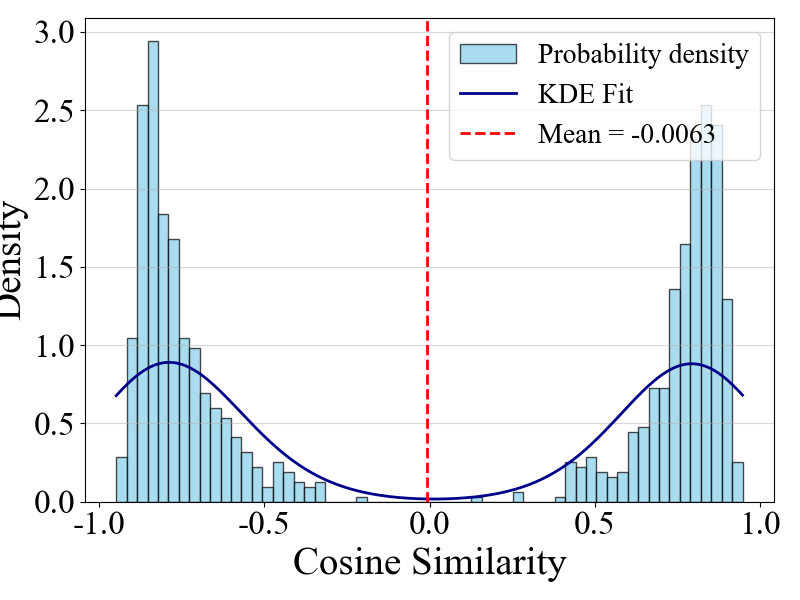}
			\caption{Gradient cosine similarity.}
			\label{fig:cosine_sim}
		\end{subfigure}
		\caption{Overview of the feasibility study results. (a) A glance at the complexity of traditional Computer Vision (CV) models and LLMs. (b) Comparison of shadow model convergence between language task (BERT-Base model~\cite{BERT} on SST-2~\cite{sst2} dataset) and vision task (CNN model on CIFAR-10~\cite{cifar10} dataset). (c) Comparison of gradient-gata correlation with BERT-Base model on SST-2 dataset. (d) Gradient cosine similarity comparison.}
		\label{fig:comparison_summary}
		\vspace{-0.3cm}
\end{figure*}
	
Although FedLLMs ensure that raw data remain local, their inheritance of the Federated Learning (FL) framework necessitates careful consideration of potential Membership Inference Attacks (MIAs)~\cite{learn-based_MIA1, learn-based_MIA2, liu2022membership, learn-based_MIA4} launched by honest-but-curious adversaries. Such adversaries may exploit proxy information, \eg gradients or parameter updates, to infer whether specific samples were part of the training data, thereby leaking sensitive information. While this threat has been extensively studied in traditional FL, its implications for FedLLMs remain underexplored. To bridge this gap, we conduct a feasibility study (see Fig. \ref{fig:comparison_summary}) to examine whether existing MIA techniques can effectively compromise the privacy of FedLLMs, as detailed below.
	
\begin{icompact}
		\item \textbf{Limited by the large scale of model parameters.} As shown in Fig. \ref{fig:model_parameter}, LLMs have orders of magnitude more parameters than traditional FL models, drastically increasing computational and memory costs for training shadow or attack models in existing MIAs \cite{learn-based_MIA1, learn-based_MIA2, liu2022membership, learn-based_MIA4}. Our experiments confirm this: training a shadow model on SST-2 \cite{sst2} requires over $50\times$ more GPU memory than on CIFAR-10 \cite{cifar10} (Fig. \ref{fig:convergence_time}). These excessive demands make current learning-based MIAs impractical for FedLLMs, underscoring the need for more efficient and scalable attack frameworks.
		
		\item \textbf{Limited by the model initialization strategy.} FedLLMs are typically initialized from pre-trained LLMs, enabling rapid convergence within a few epochs. This reduces repeated exposure of training samples during optimization, limiting adversaries' ability to capture the dynamic influence of individual data points on model updates \cite{FTA}. As shown in Fig. \ref{fig:convergence_time}, BERT-Base~\cite{BERT} model nearly converges after one epoch on SST-2, whereas that requires over twenty epochs on CIFAR-10.
		
		\item \textbf{Limited by fine-tuning and gradient semantics.} In FedLLMs, freezing a large portion of parameters, common in PEFT (\eg adapters, LoRA), significantly reduces the volume of client-uploaded gradients, thereby limiting the attack surface for gradient-based membership inference \cite{FedLLM_FL1, sparsified, sparsified2}. Moreover, as shown in Fig.~\ref{fig:grad_corr_batch}, gradients from multiple layers exhibit stronger correlations with input data than single-layer ones, implying deeper gradient exposure increases privacy risks. Notably, freezing the embedding layer prevents direct reconstruction of lexical content from its gradients \cite{embedding_MIA}. Additionally, unlike in traditional image tasks where gradients become approximately orthogonal upon convergence \cite{ICLR_MIA}, FedLLM gradients remain highly correlated across samples (Fig.~\ref{fig:cosine_sim}), obscuring individual contributions and complicating precise membership inference.
		
		
\end{icompact}
The above analysis indicates that existing MIAs are challenging to execute effectively due to the unique architectural and training characteristics of FedLLMs. Therefore, to further quantify and understand the privacy risk boundaries of FedLLMs, it is essential to investigate the actual extent of membership leakage induced by gradients during training.

In this paper, we propose \texttt{ProjRes}, a passive MIA based on projection residuals tailored for FedLLMs, which determines data membership by analyzing whether a sample’s hidden embedding lies within the linear subspace spanned by the client’s gradients. Specifically, \texttt{ProjRes} reformulates membership inference as the task of reconstructing the projection residuals of embeddings within the gradient space, thereby eliminating the need to train additional shadow or attack models, often LLMs with massive parameters, required by traditional MIAs. Because FedLLMs typically adopt initialization strategies that lead to rapid convergence, tracking the temporal evolution of member samples across multiple training rounds becomes impractical. To address this limitation, \texttt{ProjRes} exploits the intrinsic correlation between input representations and gradients, enabling efficient membership inference using only single-round gradient information. Furthermore, by reformulating membership inference as determining whether a sample’s embedding lies within the gradient span space, ProjRes effectively circumvents the challenge of performing membership inference without direct access to the gradient semantic space, which is tightly coupled with input features. Finally, \texttt{ProjRes} exploits the high sensitivity of projection residuals to subtle directional deviations, which amplifies the separation between member and non-member samples within the gradient subspace, thereby enhancing discriminative performance.

We conduct a comprehensive evaluation of \texttt{ProjRes} across four LLMs (\ie BERT-Base~\cite{BERT}, GPT-Large~\cite{GPT2}, Llama3-8B~\cite{llama3}, and Qwen2.5-14B~\cite{qwen2.5}) and four benchmark datasets (\ie CoLA-1.1 \cite{CoLA}, Yelp-tip \cite{yelp}, SST-5 \cite{SST-5}, and IMDB v1 \cite{IMDB}), demonstrating that \texttt{ProjRes} consistently outperforms state-of-the-art MIAs and achieves near 100\% accuracy in most scenarios. Furthermore, \texttt{ProjRes} remains effective under diverse FedLLM fine-tuning strategies and active defense mechanisms, highlighting its robustness and generalizability. These results offer new insights and practical methodologies for assessing the privacy risks of FedLLMs. The specific contributions are as follows:
	
	\begin{icompact}
		\item We investigate the privacy risks of FedLLMs and propose \texttt{ProjRes}, an efficient MIA that exploits gradient residual projection information to reveal sample participation.
		
		\item We reformulate membership inference in FedLLMs as a residual projection matching problem within the gradient semantic space, enabling efficient inference based on gradients from trainable components, \eg LoRA and adapter modules, as well as model substructures including attention heads and feedforward layers.
		
		\item Through theoretical analysis and empirical validation, we demonstrate that the semantic representations (embeddings) of samples in high-dimensional LLMs can be reversibly analyzed via gradient information, with increasing dimensionality amplifying privacy leakage risks.
		
		\item Extensive experiments across multiple benchmark datasets show that \texttt{ProjRes} achieves state-of-the-art performance, attaining near-perfect ($ \approx 100\%$) inference accuracy in most scenarios.
		
	\end{icompact}
	
	\section{Background and Threat Model}
	
	
	\subsection{Federated Learning}
	FL is a distributed machine learning framework that enables multiple clients to collaboratively train a model by exchanging training-related information, such as gradients, without sharing their private data. In this work, we adopt the FedSGD \cite{FL} setting. After the server initializes the global model $f_\theta$ parameterized as $\theta$, clients collaboratively participate in iterative training under the server’s coordination. In the \( t \)-th training iteration, the \( k \)-th client computes the gradient $\nabla \mathcal{L}(\theta_k^t, B_k^t)$ of the model parameters $ \theta_k^t $ based on the loss function $ \mathcal{L} $ and its local data batch $ B_k^t $. Each client then transmits its computed gradient to the server. The server aggregates the received gradients from all clients and updates the model parameters $ \theta^{t+1} $ by subtracting the aggregated gradient from the current parameters $ \theta^t $:
	\begin{equation}
		\theta^{t+1} = \theta^t - \frac{\eta}{K} \sum_{k=1}^{K} \nabla_{\theta_k^t} \mathcal{L}(\theta^t, B_k^t),
		\label{eq:theta}
	\end{equation}
	where $ K$ is the number of clients, $ \eta $ is the learning rate. The server then sends the updated parameters $ \theta^{t+1} $ to each client for updating their local models, resulting in the new global model $ \theta_k^{t+1} $. FL then repeats the above steps until the global model converges. In this way, FL enables collaborative learning between different clients without sharing data, which alleviates privacy concerns to some extent.
	
	\subsection{Federated Large Language Model}
	FedLLMs refer to the integration of FL and LLMs to address the critical challenges of data silos and limited computational resources in LLM training and fine-tuning, as shown in Fig. 1. With the growing popularity of fine-tuning methods, FedLLMs in this context primarily denote the use of the FL framework for collaboratively fine-tuning LLMs. However, due to the inherent distinctions between LLMs and traditional machine learning models, FedLLMs exhibit significant differences from conventional FL, particularly in global model initialization and the configuration of trainable parameters. The general execution process of FedLLMs can be summarized as follows:
	
	\textit{Step 1: Global Model Initialization.} 
	In conventional FL, the server typically initializes the global model with random parameters. In contrast, in FedLLMs, the initial global model $ f_{\theta^0} $ is usually instantiated from a pre-trained LLM, \ie $\theta^0  \gets \theta^{pre}$, where $ \theta^{pre} $ denotes the parameters of the pretrained LLM. Each client downloads the global model parameters $\theta^0$ before local fine-tuning.
	
	\textit{Step 2: Trainable Parameter Configuration and Local Training.}
	Unlike conventional FL, which performs full-model training, FedLLMs limit the trainable parameters to reduce resource costs and improve communication efficiency. The model parameters are divided into:
	\begin{equation}
		\theta = \theta_{\text{trainable}} \cup \theta_{\text{frozen}},
		\quad \text{where } \theta_{\text{trainable}} \subset \theta.
	\end{equation}
	That is, clients only update $\theta_{\text{trainable}}$ while keeping $\theta_{\text{frozen}}$ fixed. Depending on the configuration, FedLLMs can adopt several fine-tuning strategies, \eg, Partial Parameter Fine-Tuning~\cite{lin_2024, 10097124}, Architectural Fine-Tuning~\cite{adapter_structure, adapter_efficent, adapter_Heterogeneous, adapter_Dual-Personalizing, adapter}, and Prefix Fine-Tuning~\cite{Parameter-Efficient_Prompt_Tuning, P-tuningV2}. More details can be found in the Appendix~\ref{finetuning}.

	\textit{Step 3: Aggregation.} After local fine-tuning, clients send updates of their trainable parameters $\Delta \theta_k$ (which may include LoRA parameters, adapters, or prefixes) to the central server for aggregation. 
	
	While the FedLLMs described above effectively address the issues of data silos and insufficient computing power in collaborative fine-tuning of LLMs across different clients, it remains vulnerable to privacy inference attacks (\ie MIAs)~\cite{ICLR_MIA, ref_MIA2}. Therefore, we subsequently define our threat model and present existing MIA attacks.
	
	\subsection{Threat Model}
	Similar to prior work on MIAs~\cite{ICLR_MIA, ref_MIA2} in FL, \texttt{ProjRes} adopts a practical threat model in which an adversary seeks to conduct privacy inference attacks based on interactive gradient information, as shown in Fig. \ref{P-MIA-overview}. To avoid redundancy, we use adapter-fine-tuned FedLLMs as representative examples in the subsequent analysis and algorithmic derivation. We then elaborate on the defined threat model, detailing the adversary’s objectives, background knowledge, and capabilities.

	

	\para{Adversary's Objective.}
	Following prior work~\cite{ICLR_MIA, ref_MIA2}, the adversary’s objective is to determine whether a given sample $\bm x$ was included in the local training data of the $k$-th client during its current training round. To this end, the adversary constructs a membership inference classifier $CF(\cdot)$ that outputs a binary prediction indicating the membership status of $\bm x$:
	\begin{equation}\label{eq-11}
		CF(\bm x) =
		\begin{cases}
			1, & \text{if } \bm x \in \mathcal{D}_k^{\text{train}}, \\
			0, & \text{otherwise},
		\end{cases}	
	\end{equation}
	where $\mathcal{D}_k^{\text{train}}$ denotes the dataset used by client $k$ for the current local training epoch. An attack is considered successful if and only if $CF(\bm x) = 1$ for a sample $\bm x$ that exactly matches an instance in $\mathcal{D}_k^{\text{train}}$. If $CF(\bm x) = 1$ due to mere semantic resemblance, lexical overlap, or partial token similarity with samples in the training set, the prediction does not constitute a valid membership inference and is therefore regarded as unsuccessful.

	
	\para{Adversary's Background Knowledge.} The adversary can be any entity attempting to infer private information about clients’ local data, including but not limited to a privacy auditor, an honest-but-curious server, honest-but-curious clients, or a competing participant in the FedLLM. Each type of adversary may gain access to different sources of information depending on its role in the system. For instance, a privacy auditor may observe model updates for security evaluation; a malicious client may exploit shared global parameters; and a central server may analyze intermediate gradients uploaded by clients. In this work, and consistent with the predominant threat model in FedLLMs, we assume an honest-but-curious server as the adversary. This server follows the prescribed training protocol but passively inspects the information exchanged during training, such as model updates, intermediate gradients, or loss trajectories, to infer private attributes or determine the membership of specific samples. 
	
	
	\para{Adversary's Capabilities.} The adversary has access to the global model parameters $\theta$ and the gradients $\nabla \theta$ corresponding to the trainable parameters uploaded by the clients. Since the honest-but-curious server assumes the role of the adversary, this information lies within its legitimate authority and represents naturally obtainable intermediate information. Beyond these, the attacker has no access to any private knowledge of the target client, including local training samples, data distributions, or other sensitive information associated with the client’s local data. This restriction reflects a realistic semi-honest threat model, which is commonly adopted in the security analysis of FedLLMs.

	\subsection{Taxonomy of MIAs}
	\label{Baseline}
	Within the context of the threat model described above, prior work has primarily focused on four categories of MIAs in FL: loss-based MIAs, gradient orthogonality-based MIAs, training dynamics-based MIAs, and statistical discrepancy–based MIAs. Our analysis builds on the observation that a straightforward approach to membership inference in FedLLMs is to directly adapt these existing MIAs.
	
	Building upon Eq. \eqref{eq-11}, the adversary typically uses proxy information \(I\) (\eg gradients) to calculate the membership score $\psi(\bm x,I)$ of a sample \(\bm x\), and then determines whether $\bm x$ belongs to the client's local training dataset by comparing the score with a predefined threshold $\tau$. The classifier $CF(\cdot,\cdot)$ in Eq. \eqref{eq-11} can be rewritten as follows:
	\begin{equation}
		CF(\bm x, I)=
		\begin{cases}
			1, & \mathrm{if} \quad \psi(\bm x, I) > \tau, \\
			0, & \text{otherwise},
		\end{cases}.
		\label{eq:attack_formulation}
	\end{equation}
	In Eq. \eqref{eq:attack_formulation}, different choices of the function $ \psi(\bm x, I) $ or decision threshold $ \tau $ lead to distinct MIA strategies. We formally introduce them as follows:
	\begin{figure*}[!t]
		\centering
		\includegraphics[width=1\textwidth]{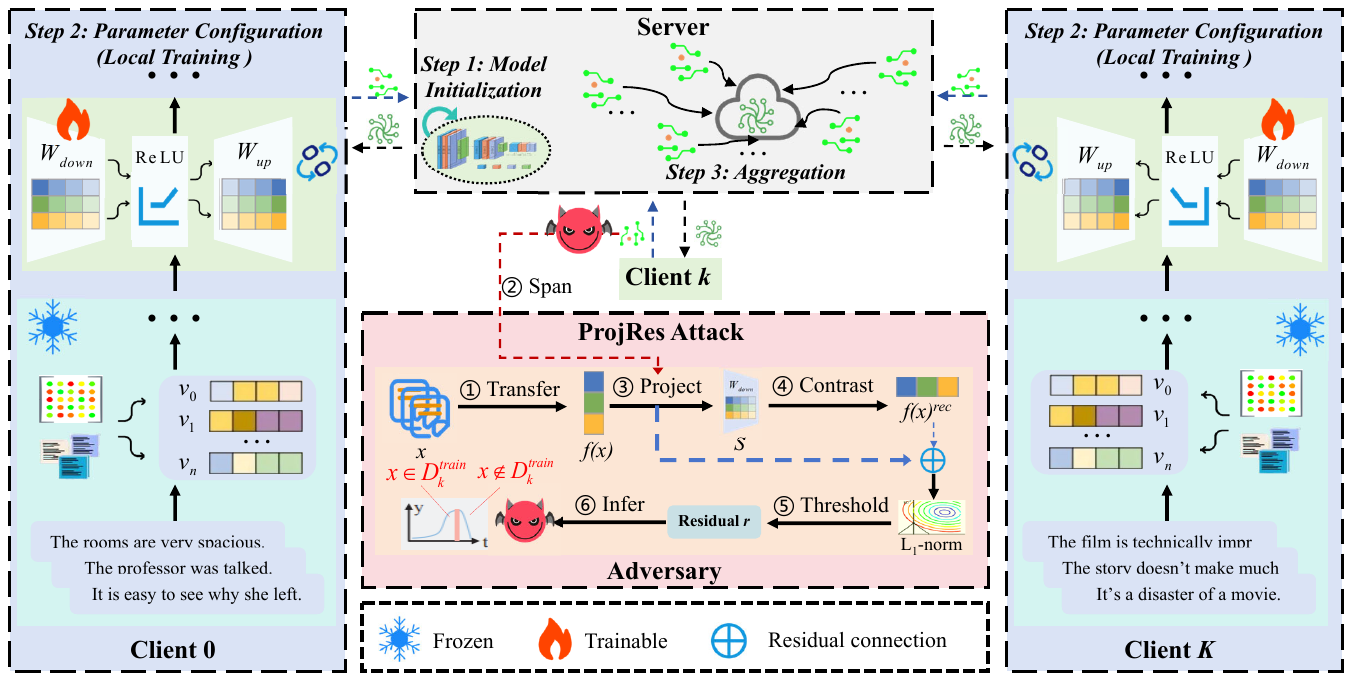} %
		\caption{Overview of the proposed \texttt{ProjRes} attack. Using the adapter as the trainable module, an honest-but-curious server conducts the \texttt{ProjRes} attack as follows:  
			\ding{172} Perform a forward pass on a target sample $\bm x$ to obtain its hidden embedding $f(\bm x)$ at the adapter layer, where $f(\cdot)$ denotes the model segment preceding the adapter.  
			\ding{173} Construct a linear subspace $\mathcal{S}$ spanned by the adapter downsampling-layer gradients uploaded by clients.  
			\ding{174} Projection: Compute the orthogonal projection of $f(\bm x)$ onto $\mathcal{S}$ to obtain $f(\bm x)^{\text{rec}}$.  
			\ding{175} Residual computation: Calculate the difference $f(\bm x) - f(\bm x)^{\text{rec}}$.  
			\ding{176} Residual measurement: Compute the residual magnitude $r = \| f(\bm x) - f(\bm x)^{\text{rec}} \|_1$.  
			\ding{177} Membership inference: If $r < \tau$, infer $\bm x$ as a member; otherwise, classify it as a non-member.}
		\label{P-MIA-overview}
		\vspace{-0.3cm}
	\end{figure*} 
	
	\para{Loss-based MIAs.} The representative methods include FedLoss \cite{ICLR_MIA}, Score-Diff \cite{ref_MIA2}, and Score-Ratio \cite{ref_MIA2}. These approaches are grounded in the key observation that the loss of a training sample on the model is typically significantly lower than that of a non-training sample. Consequently, membership can be effectively inferred by comparing the magnitude of the sample's loss. 
	\begin{equation}
		\psi(\bm x, I) = \mathcal{A}\big( \mathcal{L}(\theta, \bm x) \big),
	\end{equation}
	where \( \mathcal{A}(\cdot) \) denotes the predefined inference function.
	
	\para{Gradient Orthogonality-based MIAs.} Typical methods include Cosine Similarity \cite{ICLR_MIA} and Gradient-Diff \cite{ICLR_MIA}. These approaches leverage the observation that, as training converges, gradients generated by different samples tend to become nearly orthogonal. Under this condition, the projection of a member sample's gradient onto the true client's gradient exhibits a statistically distinguishable pattern compared to that of a non-member, enabling effective MIA.
	\begin{equation}
		\psi(\bm x, I) = \mathcal{A}\big( \nabla_{\theta_k} \mathcal{L}(\theta, \bm x) \big).
	\end{equation}
	
	\para{Training Dynamics–based MIAs.} Exemplified by FTA \cite{FTA}, this class of methods assumes that during local fine-tuning, member samples induce faster or more pronounced changes in model performance metrics, such as loss or accuracy, compared to non-member samples. Across multiple training rounds, these differences give rise to distinct temporal patterns, which can be leveraged to differentiate member samples from non-members.
	\begin{equation}
		\psi(\bm x, I) = \mathcal{A}( \theta_k^0, \theta_k^1, \ldots, \theta_k^t; \bm x).
	\end{equation}
	
	\begin{table}[t!]
		\centering
		\caption{Comparison of existing MIAs and \texttt{ProjRes}: key assumptions and applicability in FedLLMs.}
		\label{tab:mia_comparison}
		\renewcommand{\arraystretch}{1.6}
		\begin{adjustbox}{width=0.49\textwidth, center}
			\begin{tabular}{lcccc}
				\toprule
				\textbf{Method} &
				\makecell{\textbf{Requires} \\ \textbf{Shadow Models}} &
				\makecell{\textbf{Relies on} \\ \textbf{Multi-Round Dynamics}} &
				\makecell{\textbf{Assumes Gradient} \\ \textbf{Orthogonality}} &
				\makecell{\textbf{Effective in} \\ \textbf{FedLLMs}} \\
				\midrule
				Loss-based~\cite{ICLR_MIA,ref_MIA2}        & \xmark & \cmark & \xmark & \xmark \\
				Gradient Orthogonality-based~\cite{ICLR_MIA} & \xmark & \cmark & \cmark & \xmark \\
				Training Dynamics-based~\cite{FTA}         & \cmark & \cmark & \xmark & \xmark \\
				Statistical Discrepancy-based~\cite{FedMIA} & \cmark & \cmark & \xmark & \xmark \\
				\midrule
				\texttt{ProjRes} (Ours)                    & \xmark & \xmark & \xmark & \cmark \\
				\bottomrule
			\end{tabular}
		\end{adjustbox}
		\begin{tablenotes}
			\footnotesize
			\item  	\footnotesize\textit{Note: ``Effective in FedLLMs'' means the method remains practical and accurate under PEFT, high representation similarity, and single-round gradient exposure.}
		\end{tablenotes}   
		\vspace{-0.3cm}
	\end{table}
	
	\para{Statistical Discrepancy–based MIAs.} A representative method is FedMIA \cite{FedMIA}. It assumes that, for the target client $k$, member samples appear exclusively in client’s local training data and are not held by any other client. Under this assumption, the gradients uploaded by the remaining clients can be used to model the gradient distribution of non-members. The attacker then compares the gradient induced by a target sample to this modeled non-member distribution, if the sample’s gradient deviates sufficiently from the non-member distribution, the attacker infers that the sample is a member of client $k$’s training set.
	\begin{equation}
		\psi(\bm x, I) = \mathcal{A}( \theta_0^t, \ldots, \theta_{k-1}^t, \theta_{k}^t, \ldots, \theta_K^t;\bm x).
	\end{equation}
	
	\para{Discussion.} However, directly applying existing MIAs to FedLLMs faces several intrinsic challenges (see Table \ref{tab:mia_comparison}). First, the cosine similarities between hidden embeddings in LLMs are tightly concentrated within $[0.91,1]$ (to be discussed later), indicating high representation homogeneity that undermines loss-based distinguishability. Second, this similarity leads to non-orthogonal gradients (see Fig.~\ref{fig:cosine_sim}), making gradient-orthogonality–based MIAs ineffective. Third, under the FedSGD setting, clients only upload gradients from the current mini-batch, preventing attackers from exploiting training dynamics. Finally, statistical discrepancy–based MIAs, which rely on distributional differences between member and non-member gradients, lose efficacy when such gaps are minimal.
	
	\section{Our Proposed \texttt{ProjRes}}
	
	\subsection{Overview}
	\para{Key Idea.} In this section, we present \texttt{ProjRes}, an efficient MIA tailored for FedLLMs, as shown in Fig.~\ref{P-MIA-overview}. \texttt{ProjRes} consists of three core modules, \ie Sample Representation Construction, Semantic Space Mapping, and Residual Computation and Membership Decision, which proceeds through six concrete execution steps. Unlike prior approaches, \texttt{ProjRes} does not depend on traditional signals such as loss values \cite{ICLR_MIA}, prediction confidence scores \cite{ref_MIA2}, or gradient norms \cite{ICLR_MIA}, which are often unavailable or unreliable in FedLLM deployments. Instead, our core insight is to exploit the algebraic relationship between a sample’s hidden embeddings and the gradients the client uploads: by mapping embeddings into a semantic space aligned with the gradient subspace and analyzing the residuals, \texttt{ProjRes} can effectively distinguish member from non-member samples. Next, we will briefly describe each key module and its execution steps as follows.
	
	\para{Sample Representation Construction (\S\ref{sec-3-2}).} This module aims to extract the embeddings of target samples during forward propagation (\ie perform step \ding{172}), thereby analyzing the cosine similarity between the hidden embedding vectors of different samples and the $\ell_1$ norm of their pairwise differences. This provides a semantic information foundation for membership inference.
	
	\para{Semantic Space Mapping (\S\ref{sec-3-3}).}
	This module corresponds to steps \ding{173}–\ding{174} of the \texttt{ProjRes} procedure. The server first constructs a semantic subspace $\mathcal{S}$ based on the gradient vectors of the adapter’s downsampling layer collected from participating clients. Each gradient vector represents the local optimization direction induced by client-specific data, implicitly encoding the underlying semantic distribution. By performing an orthogonal projection of the target embedding $f(\bm x)$ onto $\mathcal{S}$, the server obtains its reconstructed representation $f(\bm x)^{\text{rec}}$, which captures how well the target sample aligns with the collective subspace of client data.
	
	\para{Residual Computation and Membership Inference (\S\ref{sec-3-4}).}
	In this stage, the residual vector between the original and reconstructed embeddings is computed as $f(\bm x) - f(\bm x)^{\text{rec}}$, and its $\ell_1$-norm $r = \| f(\bm x) - f(\bm x)^{\text{rec}} \|_1$ serves as the residual magnitude (\ie perform steps \ding{175}-\ding{176}). A smaller residual indicates a stronger alignment between the target sample and the client’s learned subspace, implying a higher likelihood of membership. Finally, by comparing $r$ against a predefined threshold $\tau$, the server determines the membership status of $\bm x$, completing the inference process (\ie perform step \ding{177}).

	Furthermore, in Appendix \ref{sec-3-5}, we provide a theoretical analysis of \texttt{ProjRes} regarding the upper bound of inference capabilities and privacy leakage. Notably, \texttt{ProjRes} requires only the gradient update $ \nabla{\theta_k^t} $ uploaded by the target client in a single communication round and does not rely on updates from other clients. Consequently, in the subsequent theoretical analysis, we omit the round index \(t\) and client identifier $k$, focusing instead on the relationship between the global model $f_\theta$ and the corresponding gradients $ \nabla{\theta} $.
	
	\subsection{Sample Representation Construction}\label{sec-3-2}
	\para{Key Challenge.} Historically, a crucial factor in the success of MIAs has been the development of effective sample representations, which enable the use of surrogate information (\eg gradients or losses) to infer membership. Similarly, in the context of FedLLMs, a fundamental challenge lies in determining how to represent samples effectively, that is, identifying what information best captures their intrinsic semantic and structural characteristics for reliable inference.

	\para{Key Idea.} To recap: In a FedLLM, the LLM essentially functions as an embedding model, whose core role is to map textual semantic information into high-dimensional vector representations. Through this mapping, semantically similar texts are positioned close together in the embedding space, while semantically dissimilar texts are placed farther apart—thereby transforming abstract semantic relationships into a computable vector space. This process implicitly relies on a key assumption: \textit{different input samples correspond to distinct hidden embedding vectors.} Building on this intuition, our core idea is to leverage these hidden embeddings as sample representations, enabling efficient and fine-grained membership inference.

	To validate this assumption, we conducted experiments using the BERT-Base model on the SST-2 dataset, focusing on the cosine similarities between different samples’ hidden embedding vectors and the $\ell_1$-norms of their pairwise differences. For visualization, Fig. \ref{fig:cosine_parameter} presents the cosine similarity matrix for 50 randomly selected samples, while Fig. \ref{fig:L1_Distance} depicts the $\ell_1$-norms of the pairwise differences between their embedding vectors. This visualization provides a clear comparison of the relative similarities and distinctions among the samples’ hidden representations.

	\begin{figure}[t]
		\centering
		\begin{subfigure}[t]{0.23\textwidth}
			\includegraphics[width=\textwidth]{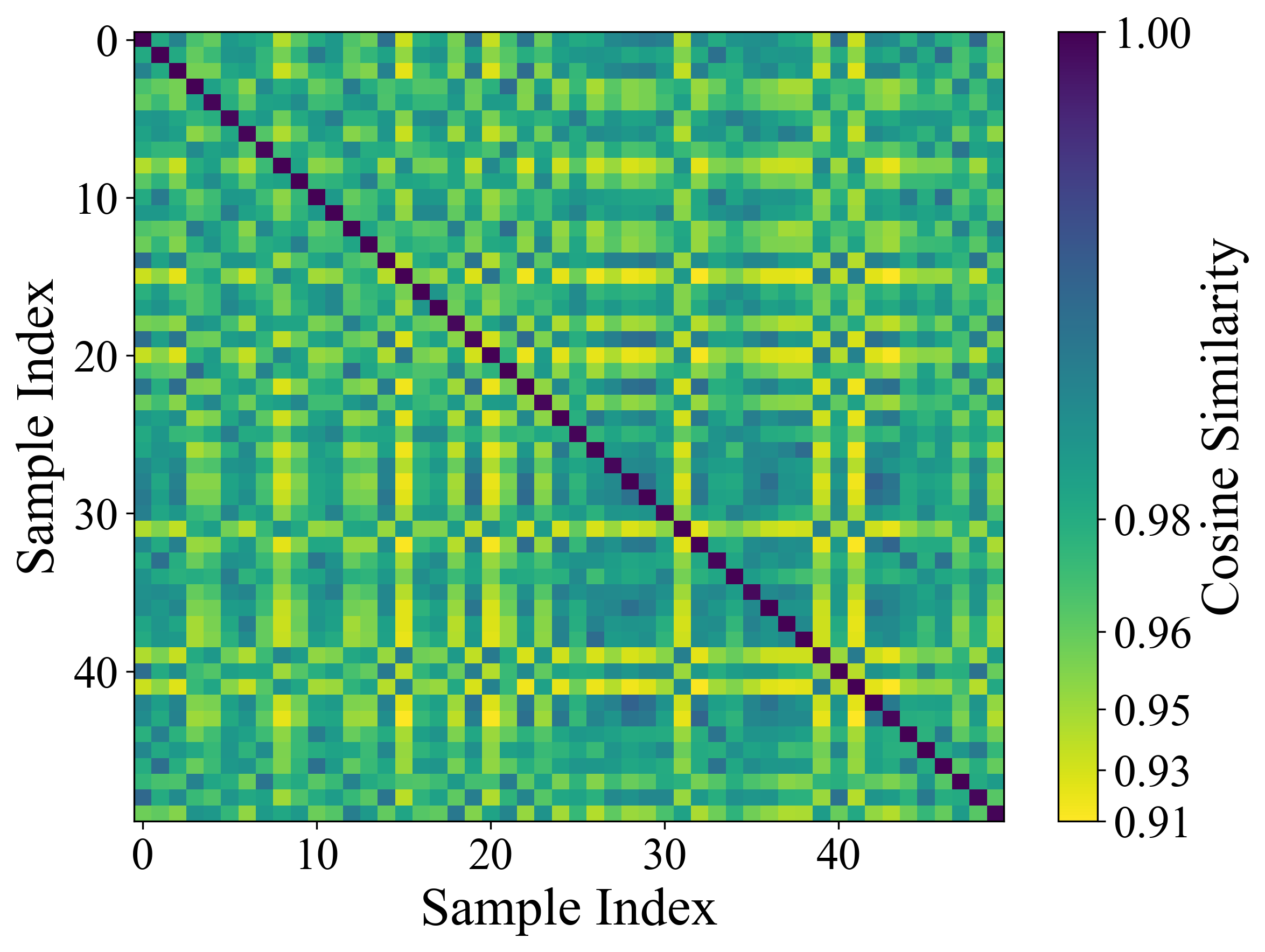}
			\caption{Cosine similarity.}
			\label{fig:cosine_parameter}
		\end{subfigure}
		\hfill
		\begin{subfigure}[t]{0.23\textwidth}
			\includegraphics[width=\textwidth]{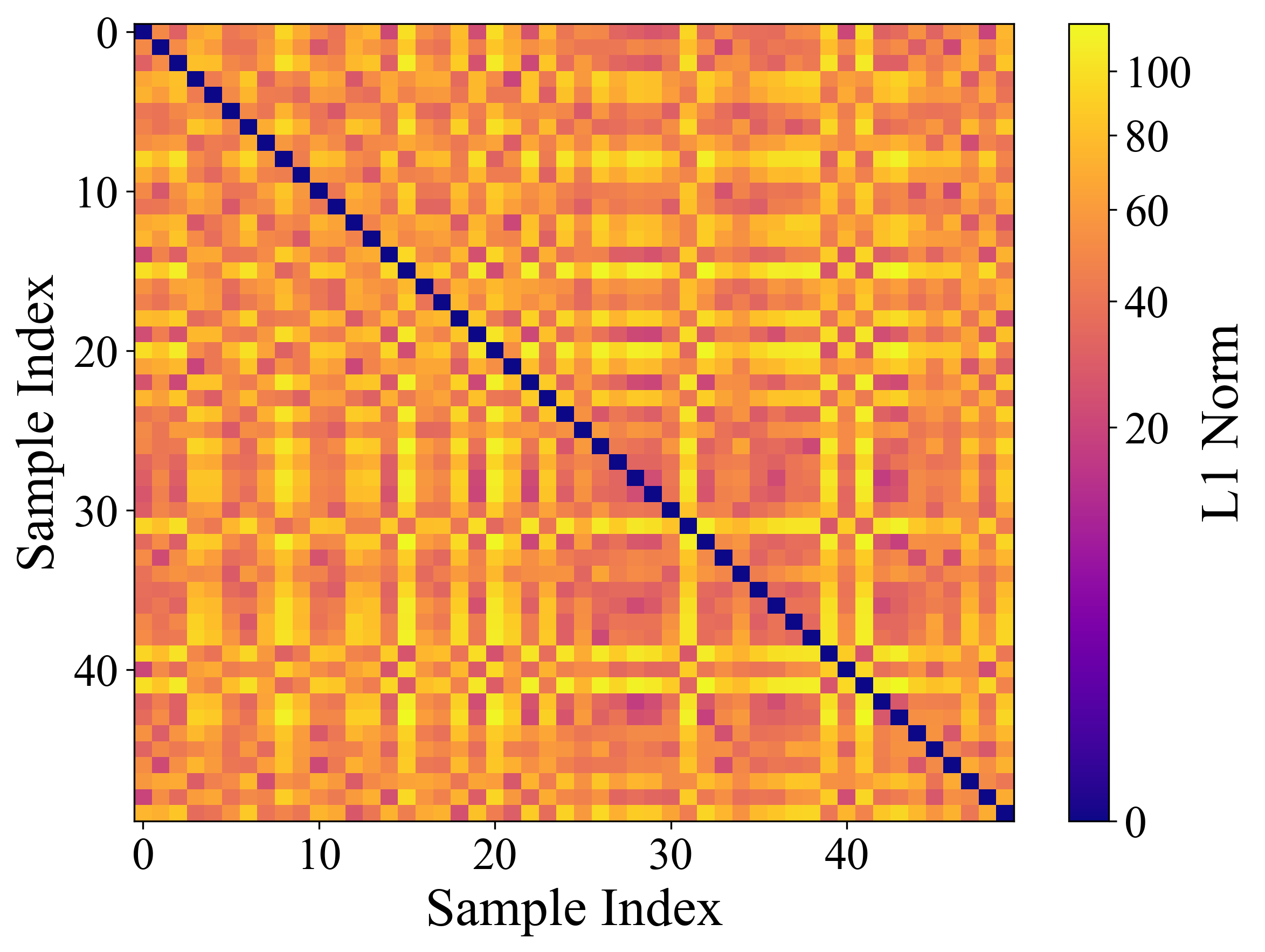} 
			\caption{$\ell_1$ norms.}
			\label{fig:L1_Distance}
		\end{subfigure}
		\caption{Comparison of hidden embedding vector properties across different samples. (a) Cosine similarity comparison. (b) $\ell_1$ norms of vector differences comparison.}
		\label{fig:Similarity_Distance}
		\vspace{-0.5cm}
	\end{figure}
	
	From Fig.~\ref{fig:Similarity_Distance}, two key observations can be drawn. First, there exist discernible differences in the directions of hidden embedding vectors across different samples. As illustrated in Fig.~\ref{fig:cosine_parameter}, although the cosine similarities between hidden embeddings generally fall within the range of 0.91 to 1, each vector achieves a cosine similarity of 1 only with itself. This suggests that while the embedding vectors share broadly similar orientations, they are not identical. Second, there are substantial differences in the magnitudes of hidden embedding vectors, as reflected in Fig.~\ref{fig:L1_Distance}. The $\ell_1$-norms of pairwise differences between embeddings are relatively large—an effect amplified by the high dimensionality of LLM embeddings, which accentuates even subtle distinctions among samples.


	These observations indicate that both the directional information and the $\ell_1$-norm magnitudes of hidden embedding vectors can effectively characterize samples, which aligns with intuitive understanding. Specifically, the directional information captures angular distinctions between embeddings, compensating for cases where $\ell_1$-norms alone fail to distinguish samples with similar orientations. Conversely, the $\ell_1$-norm provides a complementary measure of magnitude differences, addressing limitations of relying solely on directionality. By integrating these two aspects, we achieve a more comprehensive representation, enabling more accurate description and differentiation of samples, enhanced feature capture, and improved discrimination precision, particularly in complex datasets. While obtaining hidden embedding vectors is straightforward, requiring only a forward pass through the model (see Step \ding{172}), effectively leveraging both their directional and magnitude information remains a key technical challenge in our design.

	
	
	\subsection{Semantic Space Mapping} 
	\label{sec-3-3}
	\para{Key Challenge.} Once we obtain a sample’s representation, we need to establish how that representation semantically relates to model gradients, which is a necessary prerequisite for gradient-based membership inference~\cite{MIA_based_on_Grade1}. Gradients, as update directions, implicitly encode the learning signal induced by training samples and thus carry semantic information in the model’s high-dimensional parameter space~\cite{MIA_based_on_ReLU}. However, this gradient space differs fundamentally from the embedding (semantic level) space in both structure and dimensionality. The key challenge, therefore, is to learn an effective mapping from the high-dimensional gradient space back to the (comparatively low-dimensional) embedding space and to reconstruct a sample’s semantic content from gradients that only indirectly reflect it. 

	\para{Key Idea.} Gradients, which capture the model’s progression from sample-specific learning to generalization, intrinsically encode how the semantic information of input samples is transformed within the model’s parameter space. The exact form of this mapping depends on the underlying network architecture. Building on this intuition, our key idea is to investigate the intrinsic relationship between gradients and sample-level semantics by exploiting the structural properties of neural networks. As illustrated in Fig.~\ref{P-MIA-overview}, the hidden embeddings produced by a Transformer layer serve as the input to the Adapter module inserted at that layer. Owing to the principle of backpropagation, the gradients of the Adapter inherently carry information derived from these input embeddings. In the subsequent sections, we detail how this gradient-embedded semantic information can be effectively extracted and utilized for membership inference.
	
	According to \cite{adapter_structure}, an Adapter typically comprises three components, \ie a downsampling layer, a non-linear activation layer, and an upsampling layer, as illustrated in Fig.~\ref{P-MIA-overview}. The downsampling layer serves as the Adapter’s input layer and can be viewed as a generalized form of a fully connected structure, encompassing modules such as LoRA, $Q/K/V$ projection matrices, and multilayer perceptrons. Even when the input dimension is smaller than or equal to the output dimension, a comparable downsampling effect can still be achieved through output sampling techniques. For a fully connected layer, let the input be $\bm{X} \in \mathbb{R}^{p \times n}$ and the corresponding output be $\bm{Y} \in \mathbb{R}^{p \times m}$, where $m/n$ denotes the downsampling factor. Let $\bm{W} \in \mathbb{R}^{n \times m}$ be the weight matrix connecting the input and hidden layers, and $\bm{b} \in \mathbb{R}^{m}$ is the bias vector. The relationship between the input and the hidden-layer output can thus be formulated as:
	\begin{equation}	\label{eq:hidden_layer_output}
		\bm{Y} = \bm{X} \bm{W} + \bm{b}.	
	\end{equation}
	
	According to the chain rule of differentiation, the gradient of the weight parameters can be computed as follows:
	\begin{equation}
		\nabla \bm{W} = \frac{\partial \mathcal{L}}{\partial \bm{W}} = \frac{\partial \mathcal{L} / \partial \bm{Y}}{\partial \ \bm{Y} / \partial \bm{W}} = \frac{\partial \mathcal{L}}{\partial \bm{Y}} \bm{X},
		\label{eq:w_gradient}
	\end{equation}
	where \( \mathcal{L} \) denotes the loss value. Let \( {\partial \mathcal{L}}/{\partial \bm{Y}}= {\bm\alpha} \), then Eq. \eqref{eq:w_gradient} can be rewritten as:
	\begin{equation}
		\nabla \bm{W}={\bm\alpha}\bm{X}.
		\label{eq:gradient}
	\end{equation}
	By treating \( {\alpha} \) as constant coefficientes, it can be observed that the gradient of the fully connected layer weights \( \bm{W}\) is essentially a linear combination of the input data \( \bm{X} \). In other words, the gradient contains complete information about the input, which demonstrates the memory property of the weights' gradient $ \nabla \bm{W} $ in the fully connected layer. The ranks of matrices $ \nabla \bm{W} $, $ {\bm\alpha} $, and $ \bm{X} $ can be expressed as follows:
	\begin{equation}
		\operatorname{rank}(\nabla \bm{W})=\operatorname{rank}({\bm\alpha}\bm{X})=\operatorname{min}(n, p, m),
		\label{eq:rank_w}
	\end{equation}
	\begin{equation}
		\operatorname{rank}({\bm\alpha})=\operatorname{min}(m, p),
		\label{eq:rank_a}
	\end{equation}
	\begin{equation}
		\operatorname{rank}(\bm{X})=\operatorname{min}(n, p),
		\label{eq:rank_x}
	\end{equation}
	where \( \operatorname{rank}(\cdot) \) denotes the rank function. The equality $ \operatorname{rank}(\nabla \bm{W})= \operatorname{rank}(\bm{\alpha})=\operatorname{rank}(\bm{X}) $ holds if and only if $ p \leq m $ and $ p<n $. The analysis of the remaining cases is provided in Appendix \ref{Basic_Capability}. In this case, 
	\begin{equation}
		\mathcal{S} = \operatorname{Span}(\nabla \bm{W})=\operatorname{Span}(\bm{X}),
		\label{eq:Span_x_w}
	\end{equation}
	where \(\operatorname{Span}(\cdot)\) denotes the linear subspace spanned by the given vectors.
	
	Since \( p < n \), the space \( \mathcal{S} = \operatorname{Span}(\bm X) \) forms a low-rank linear subspace, specifically, whose number of basis vectors \( p \) is strictly smaller than the ambient dimension \( n \). In this setting, the probability that a vector drawn at random from a continuous distribution over \( \mathbb{R}^n \) lies exactly in \( \mathcal{S} \) is zero. Consequently, if a given vector is observed to reside within \( \mathcal{S} \), it can be inferred with high confidence that the vector is generated from (or lies in the span of) the basis of \( \mathcal{S} \). Therefore, to determine whether an arbitrary vector \(\bm{x}_i\) lies in the subspace \(\mathcal{S}\), we first compute its projection onto \(\mathcal{S}\) using the following formula, thereby transforming the information in the high-dimensional subspace \(\mathcal{S}\) into a vector that is dimensionally consistent with \(\bm{x}_i\).
	\begin{equation}
		\bm{x}_i^{\mathrm{rec}} = \Pi_{\mathcal{S}}(\bm{x}_i),
		\label{eq:x_rec}
	\end{equation}
	where \(\Pi_{\mathcal{S}}(\cdot)\) denotes the projection of a vector onto the subspace \(\mathcal{S}\), whose computation is detailed in Appendix \S\ref{full_description}. Based on Eq. \eqref{eq:x_rec}, we can obtain the projection \(f(\bm x)^{\mathrm{rec}}\) of \( f(\bm x) \) on the subspace \( \mathcal{S} \), 
	\begin{equation}
		f(\bm x)^{\mathrm{rec}} = \Pi_{\mathcal{S}}(f(\bm x)).
		\label{eq:f_rec}
	\end{equation}
 \( f(\bm x)^{\mathrm{rec}} \) also reflects the semantic information component of the hidden embedding vector \( f(\bm x) \) captured by the gradients. At this point, \texttt{ProjRes} has completed steps \ding{173}-\ding{174}.

	\subsection{Residual Computation and Membership Inference}\label{sec-3-4}
	\para{Key Idea.} From the above, we can see that \( f(\bm x)^{\mathrm{rec}} \) fundamentally quantifies the extent to which the semantic content of the hidden embedding \( f(\bm x) \) contributes to the model gradients. This contribution is inherently governed by the membership status of the sample \( \bm x \): if \( \bm x \) belongs to the training data, its semantic information should be adequately captured in the gradients, resulting in high similarity between the reconstructed embedding \( f(\bm x)^{\mathrm{rec}} \) and the original embedding \( f(\bm x) \); in contrast, for non-member samples, this alignment is significantly diminished. Consequently, effectively exploiting the intrinsic relationship between \( f(\bm x) \) and \( f(\bm x)^{\mathrm{rec}} \) to substantially enhance the distinguishability between member and non-member samples in this semantic space is our key idea.
	
	As demonstrated in \S\ref{sec-3-2}, the hidden embedding vectors of different samples exhibit highly aligned directions in the vector space, indicating that direction-based metrics, such as cosine similarity, are insufficient for reliably distinguishing between member and non-member samples. Moreover, the per-dimension differences in these embeddings are extremely small: for a 768-dimensional hidden embedding, the reconstruction error has an \( \ell_1 \) norm of approximately 100, corresponding to an average absolute difference of only about 0.13 per dimension. Such small discrepancies are further attenuated under higher-order norms (\eg \( \ell_2 \), \( \ell_n \) with \( n > 1 \)), which suppress the discriminative signal between member and non-member samples. Therefore, we adopt the \( \ell_1 \) norm as our final evaluation metric, as it effectively accumulates these subtle per-dimension deviations across the high-dimensional space, thereby significantly amplifying the distinguishability between member and non-member samples. To this end, the residual $ r $  can be calculated as the following equation:
	\begin{equation}
		r=\|f(\bm x) - f(\bm x)^{\mathrm{rec}}\|_1.
		\label{eq:residual}
	\end{equation}
	
	Therefore, in light of the preceding analysis, for the hidden embedding \( f(\bm x) \) of any real sample \( \bm x \), if the residual \( r \) is sufficiently small, we can conclude that \( f(\bm x) \) and \( f(\bm x)^{\mathrm{rec}} \) are sufficiently similar, \ie \( f(\bm x) \) can be accurately reconstructed within the linear subspace \( \mathcal{S} \). This implies that the semantic information encoded in the hidden embedding \( f(\bm x) \) has been fully captured by the model gradients, \( f(\bm x) \) must have served as an input to the Adapter module during local training, implying that \( \bm x \) belongs to the client’s training data. Conversely, if \( f(\bm x) \) cannot be well reconstructed in \( \mathcal{S} \), then \( \bm x \) was not used as an Adapter input and is thus not part of the local training set. The membership of \( \bm x \) can be expressed as follows:
	\begin{equation}\label{eq-27}
		CF(\bm x,f(\bm x)^{\mathrm{rec}}) =
		\begin{cases}
			1, & \text{if }  r < \tau, \\
			0, & \text{otherwise},
		\end{cases}	
	\end{equation}
	where $\tau$ is the threshold, typically set to a value below $10^{-2}$. Steps \ding{175}-\ding{177} of \texttt{ProjRes} have been completed. The algorithmic procedure of \texttt{ProjRes} is as Algorithm~\ref{alg:P-MIA} in Appendix \ref{full_description}.
	
	Two key observations are noteworthy:
	First, as shown in \S\ref{sec-3-2}, the hidden embeddings $f(\bm x)$ of different samples exhibit distinct directional properties, ensuring that the reconstructed embedding \( f(\bm x)^{\mathrm{rec}} \) cannot be mistaken for that of any other sample.
	Second, the pairwise differences between embeddings of distinct samples display large $\ell_1$-norm values, resulting in a clear separation in the projection residuals $r$ between member and non-member samples. This separation forms a strong and reliable basis for accurate membership inference.

	\para{Extend to other fine-tuning strategies.} In the algorithmic derivation, \S\ref{sec-3-2} and \S\ref{sec-3-4} are based on general assumptions that characterize the intrinsic properties of the hidden embedding $f(\bm x)$, thereby ensuring broad applicability of their conclusions. In contrast, \S\ref{sec-3-3} focuses on the specific case of the downsampling layer within an Adapter module, with all derivations grounded in Eq.~\eqref{eq:hidden_layer_output}, that is \( \bm{Y} = \bm{X}\mathbf{W} + \bm{b}\), where $\bm{X} \in \mathbb{R}^{p \times n}$ denotes the input, $\bm{Y} \in \mathbb{R}^{p \times m}$ the output, and $\bm{W} \in \mathbb{R}^{n \times m}$ and $\bm{b} \in \mathbb{R}^{m}$ represent the learnable weight matrix and bias term, respectively. This formulation naturally leads to a key insight: within PEFT-based FedLLM frameworks, any trainable module whose input–output relationship follows the linear mapping in Eq.~\eqref{eq:hidden_layer_output} can potentially serve as an attack surface for \texttt{ProjRes}, since its weight gradients may encode recoverable semantic information about training samples. Typical vulnerable modules in FedLLMs therefore include, but are not limited to:
	\begin{icompact}
		\item LoRA modules in LoRA-based fine-tuning,
		\item trainable attention mechanisms,
		\item trainable fully connected layers within trainable FFNs.
	\end{icompact}
	
	\section{Evaluation}
	
	
	\subsection{Experimental Setup}
	
	\para{Datasets.} We evaluate on four text datasets: CoLA-1.1 (CoLA) \cite{CoLA}, Yelp-tip (Yelp) \cite{yelp}, SST-5 (SST) \cite{SST-5}, and IMDB v1 (IMDB) \cite{IMDB}. These datasets cover grammar verification, coarse/fine-grained sentiment analysis, and are widely used in machine learning privacy risk studies \cite{MIA_based_on_Grade1, Acitvate_MIA_FLLMs, DAGER}. The basic information of the datasets is shown in Table \ref{Dataset_Information}.
	
	\begin{table}[!t]
		\renewcommand{\arraystretch}{1.2}
		\setlength{\tabcolsep}{4pt}
		\centering
		\caption{Dataset information.}
		\label{Dataset_Information}
		\begin{threeparttable}
			{\begin{tabular}{cccccc}	\toprule
					\textbf{Datasets}   & \textbf{\# of Samples} &\textbf{\# of Categorie}s   & $L_{min}$   & $L_{max}$  & $L_{ave}$            \\\midrule
					CoLA       &  8551       & 2   &  2       & 42      &   7.7             \\
					Yelp       &  1320761    & 10   &  1       & 113     &   11.41           \\
					SST        &  8544       & 5   &  2       & 52      &   19.14           \\ 
					IMDB       &  25000      & 3    &  10      & 2470    &   233.79          \\\bottomrule 
			\end{tabular}}{} 
			\begin{tablenotes}
				\footnotesize
				\item ``\# of'' denotes the term ``Number of''.
				\item  $L$ denotes text length, which represents the unit in characters.
			\end{tablenotes}      
		\end{threeparttable}
		\vspace{-0.2cm}
	\end{table}
	
	\para{Models.} We evaluate \texttt{ProjRes} on four pre-trained models, \ie BERT-Base \cite{BERT}, GPT2-Large \cite{GPT2}, Llama3-8B \cite{llama3}, and Qwen2.5-14B \cite{qwen2}. The architectural specifications of the selected models are presented in Table \ref{Model_Information}, detailing three core technical characteristics: positional encoding implementation, type of attention mechanism, and hidden embedding dimensions. These parameters constitute key discriminators among different LLMs.
	
	\begin{table}[!t]
		\renewcommand{\arraystretch}{1.2}
		\setlength{\tabcolsep}{4pt}
		\centering
		\caption{Model information.}
		\label{Model_Information}
		\begin{threeparttable}
			\begin{tabular}{lccc}
				\toprule
				\textbf{Model}           & \makecell{\textbf{Positional}\\\textbf{Encoding}}  & \makecell{\textbf{Attention}\\\textbf{Mechanism}}     & \makecell{\textbf{Hidden Embedding}\\\textbf{Dimension}} \\
				\midrule
				BERT-Base       & \checkmark             & Masked attention        & 786    \\
				GPT2-Large      & $\times$               & Causal attention        & 1280   \\
				Llama3-8B       & $\times$               & Causal attention        & 4096   \\
				Qwen2.5-14B     & $\times$               & Causal attention        & 5120   \\
				\bottomrule 
			\end{tabular}
		\end{threeparttable}
		\vspace{-0.5cm}
	\end{table}
	
		\begin{figure*}[!t]
		\centering
		
		\begin{subfigure}[t]{0.23\textwidth}
			\includegraphics[width=\textwidth]{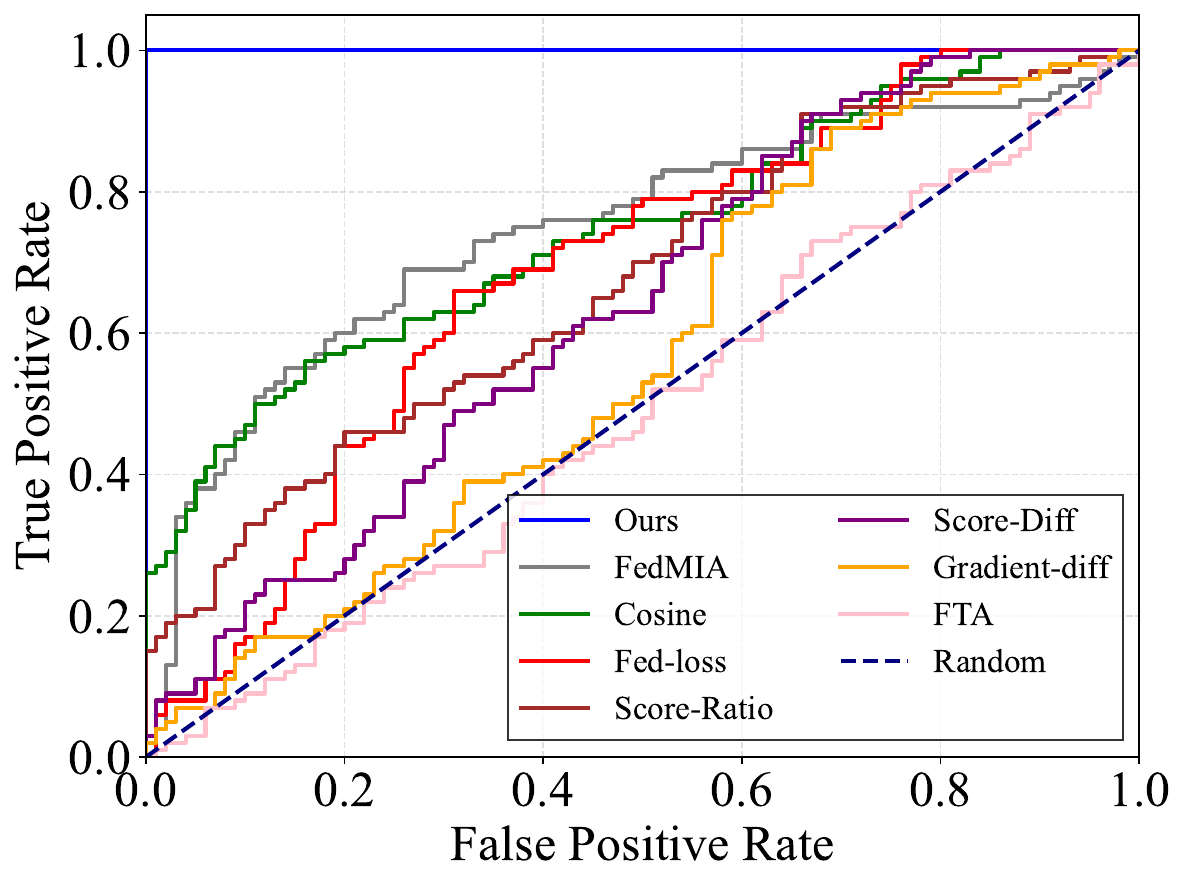}
			\caption{BERT-Base on CoLA.}
			\label{fig:bert_cola}
		\end{subfigure}
		\hfill
		\begin{subfigure}[t]{0.23\textwidth}
			\includegraphics[width=\textwidth]{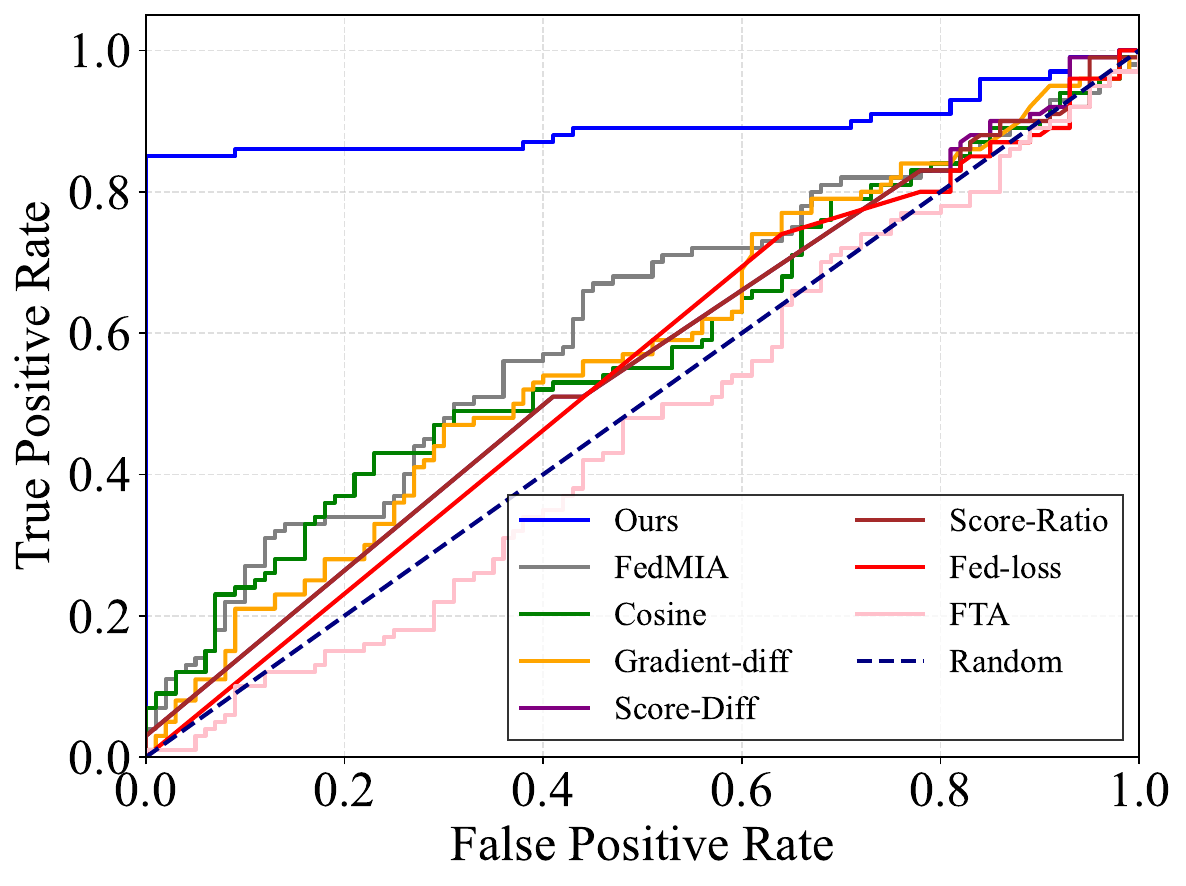} 
			\caption{BERT-Base on Yelp.}
			\label{fig:bert_yelp}
		\end{subfigure}
		\hfill
		\begin{subfigure}[t]{0.23\textwidth}
			\includegraphics[width=\textwidth]{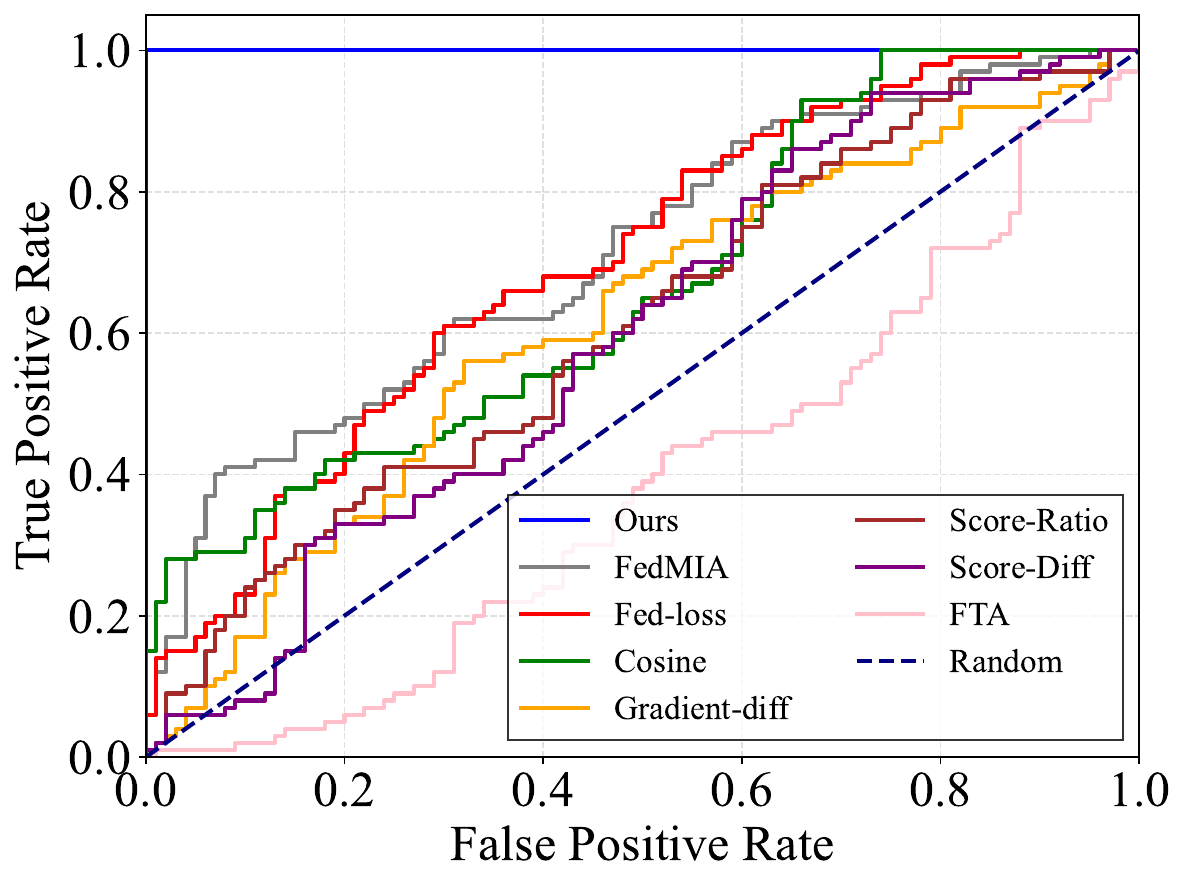}
			\caption{BERT-Base on SST.}
			\label{fig:bert_sst}
		\end{subfigure}
		\hfill
		\begin{subfigure}[t]{0.23\textwidth}
			\includegraphics[width=\textwidth]{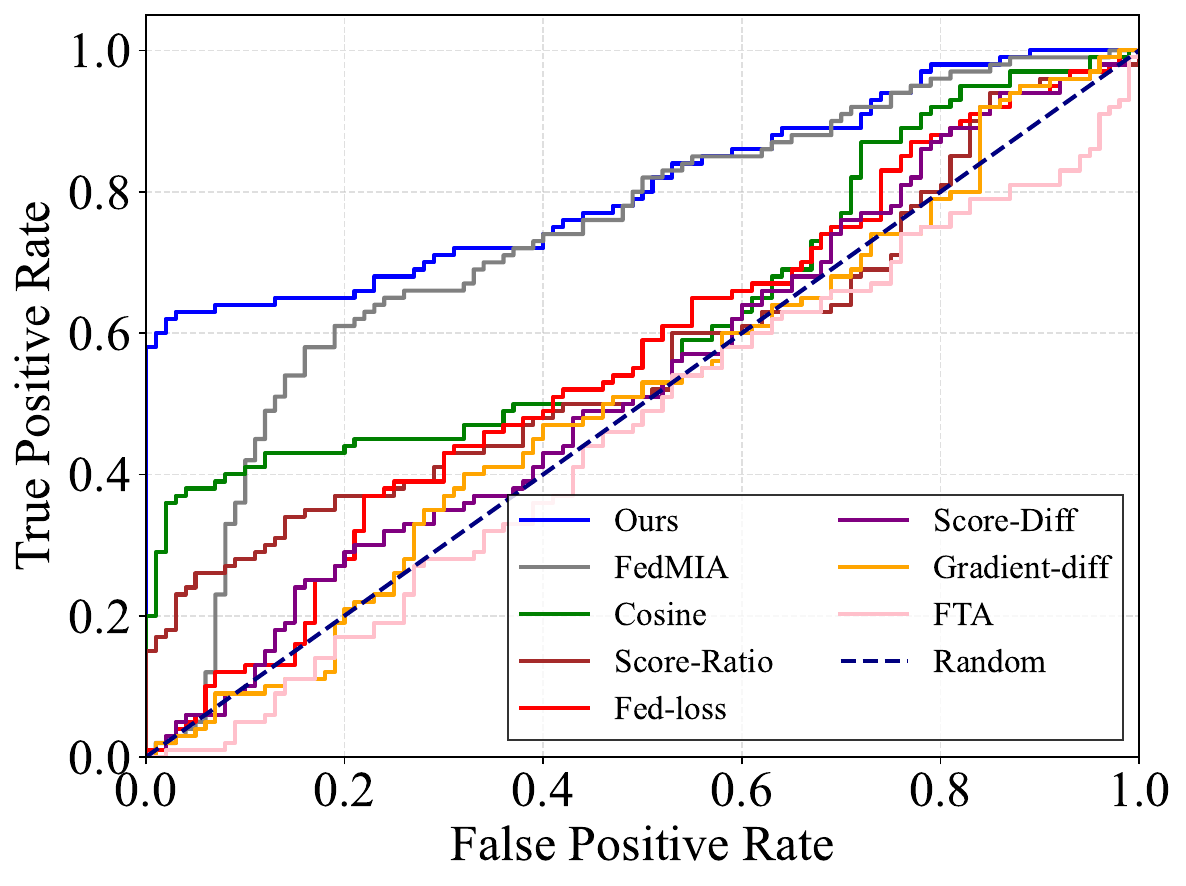}
			\caption{BERT-Base on IMDB.}
			\label{fig:bert_imdb}
		\end{subfigure}
		
		\begin{subfigure}[t]{0.23\textwidth}
			\includegraphics[width=\textwidth]{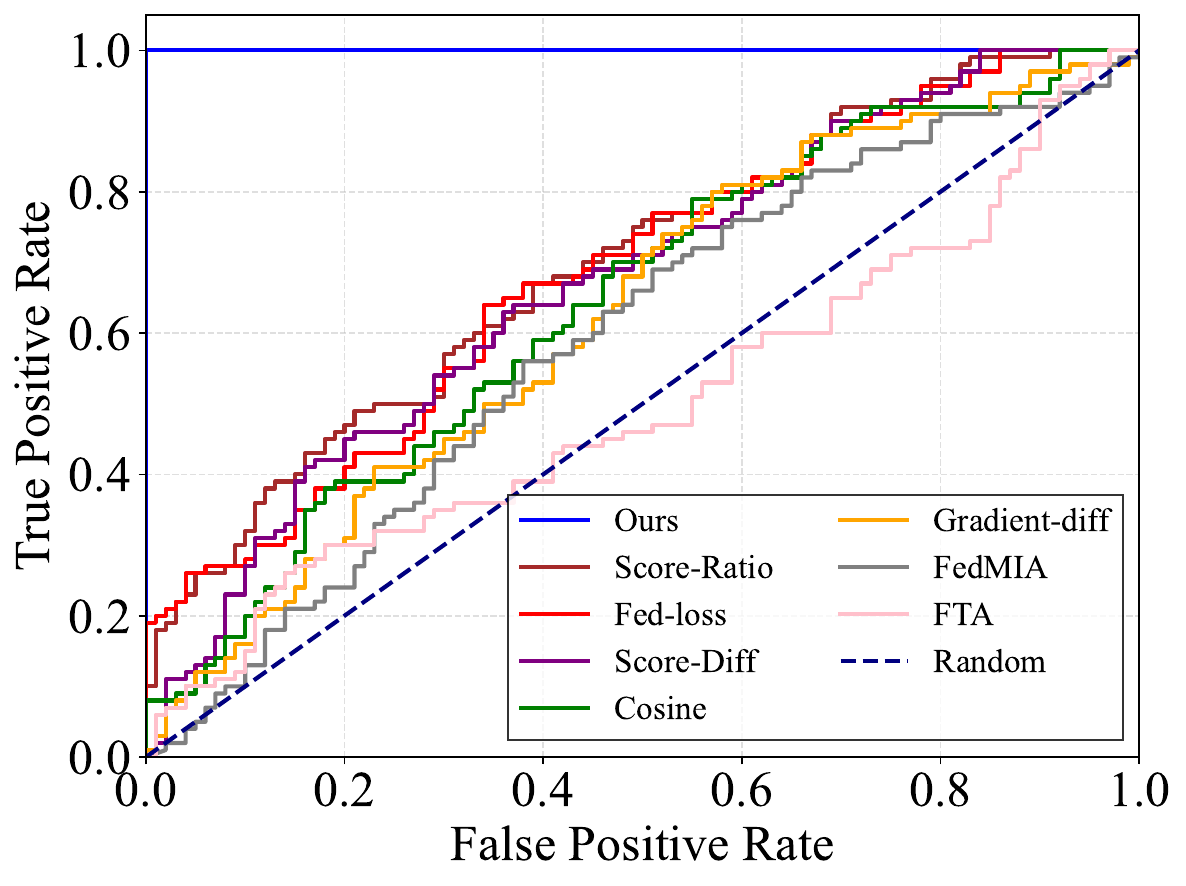}
			\caption{GPT2-Large on CoLA.}
			\label{fig:gpt_cola}
		\end{subfigure}
		\hfill
		\begin{subfigure}[t]{0.23\textwidth}
			\includegraphics[width=\textwidth]{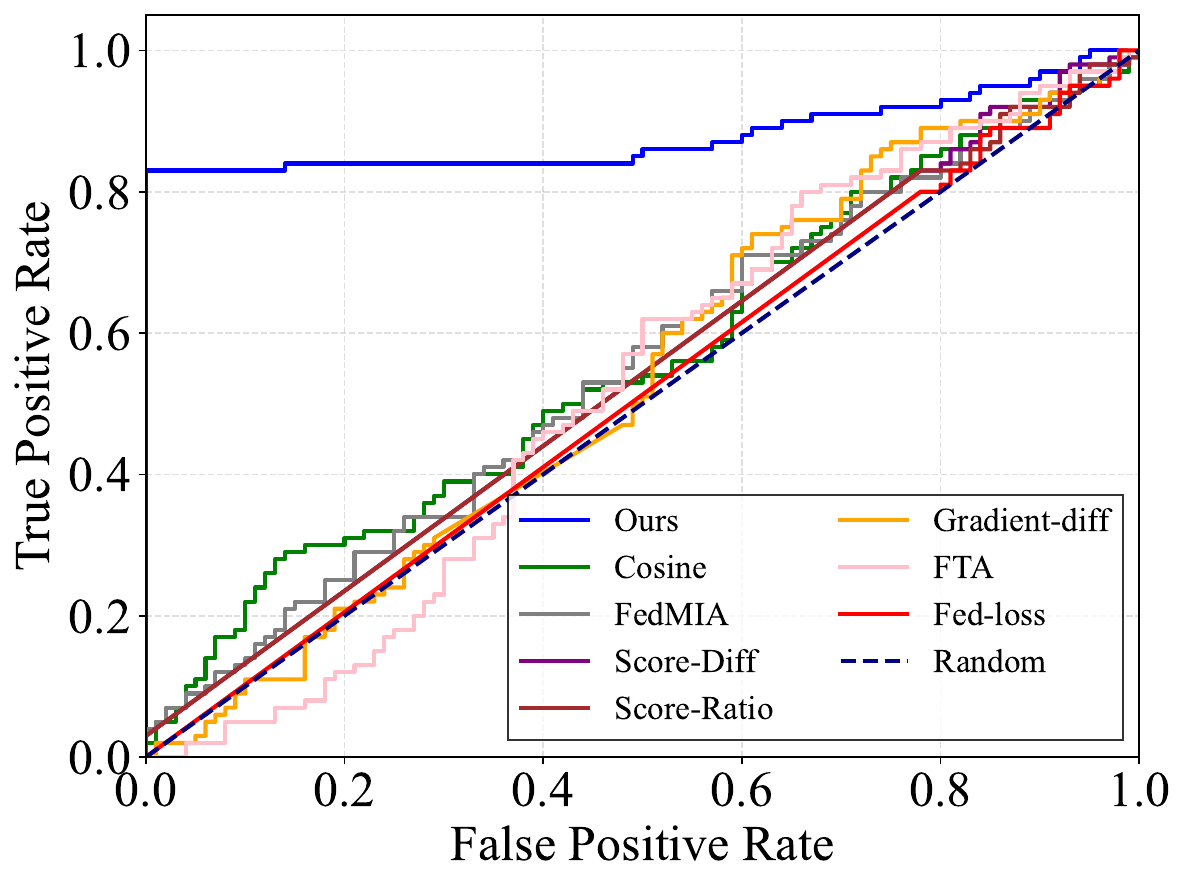} 
			\caption{GPT2-Large on Yelp.}
			\label{fig:gpt_yelp}
		\end{subfigure}
		\hfill
		\begin{subfigure}[t]{0.23\textwidth}
			\includegraphics[width=\textwidth]{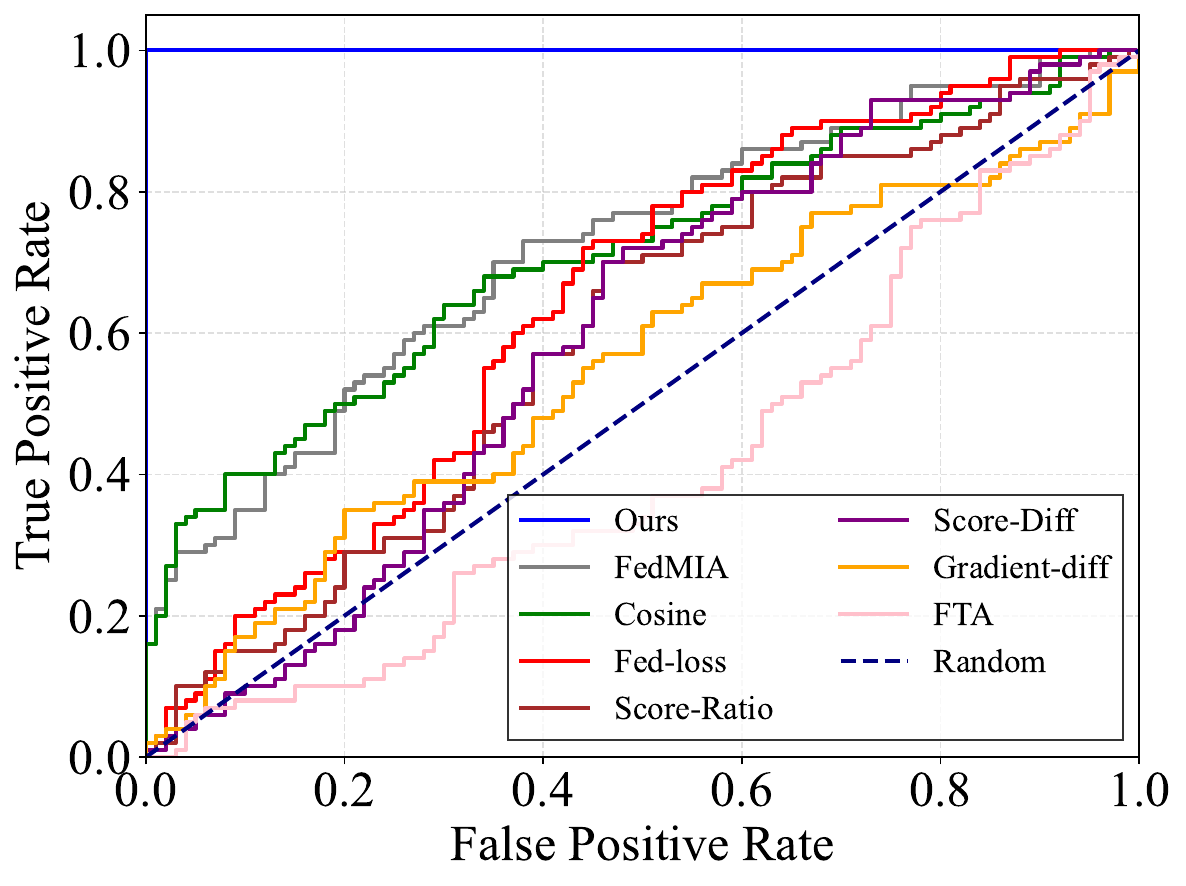}
			\caption{GPT2-Large on SST.}
			\label{fig:gpt_sst}
		\end{subfigure}
		\hfill
		\begin{subfigure}[t]{0.23\textwidth}
			\includegraphics[width=\textwidth]{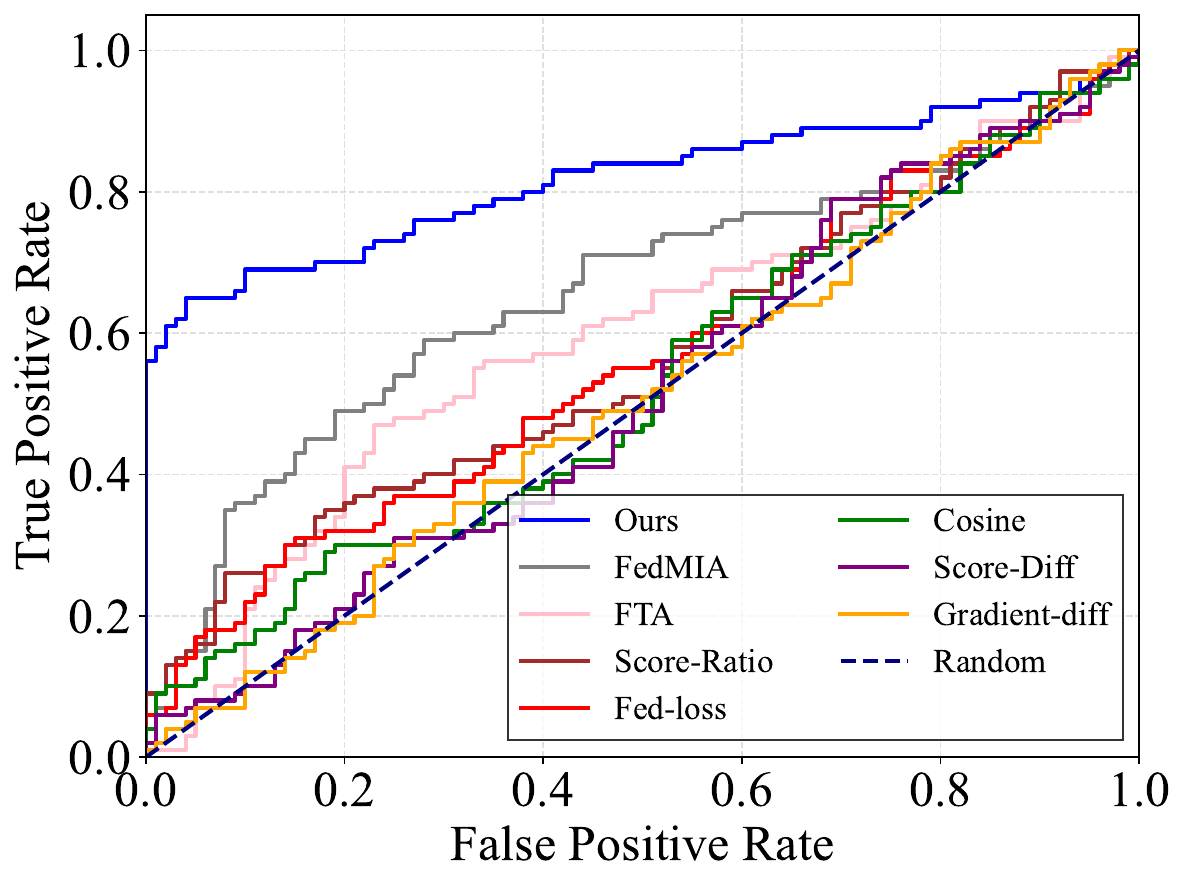}
			\caption{GPT2-Large on IMDB.}
			\label{fig:gpt_imdb}
		\end{subfigure}
		
		\begin{subfigure}[t]{0.23\textwidth}
			\includegraphics[width=\textwidth]{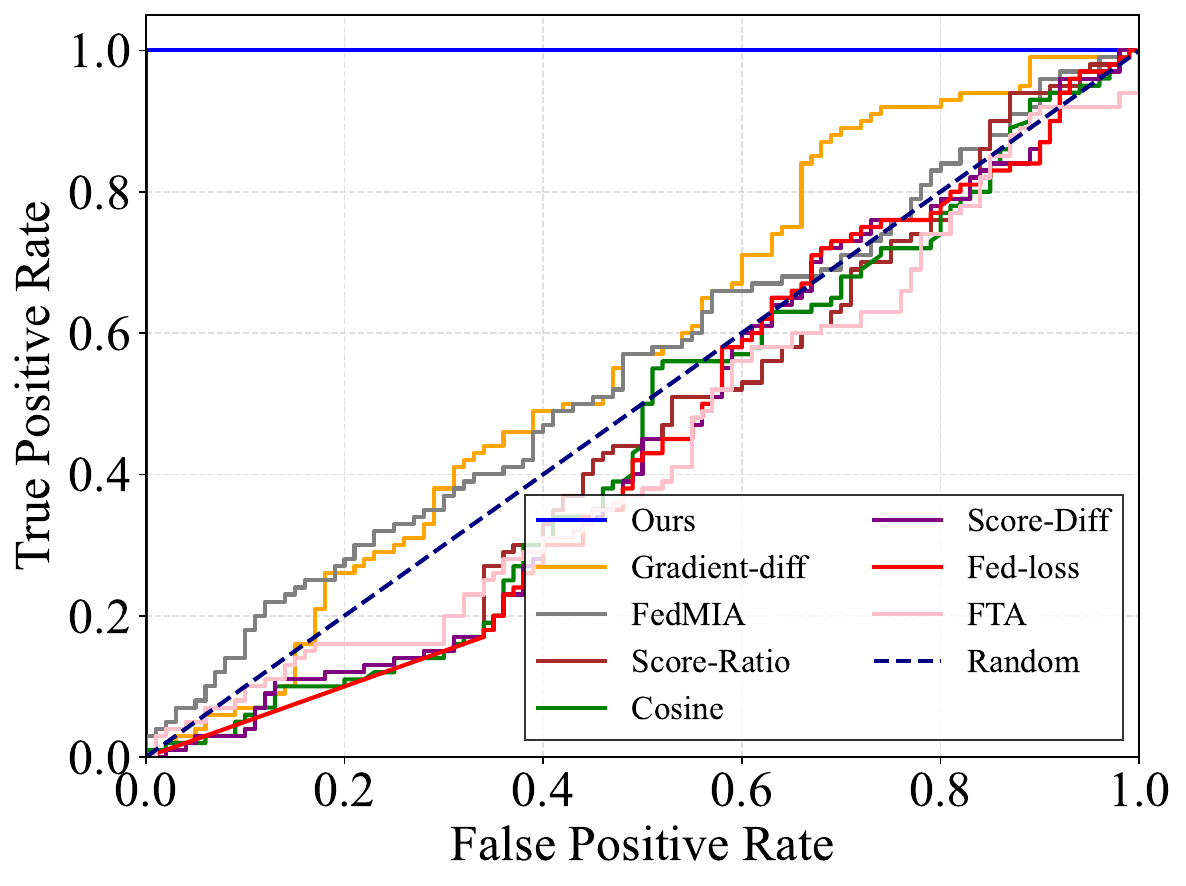}
			\caption{Llama3-8B on CoLA.}
			\label{fig:llama_cola}
		\end{subfigure}
		\hfill
		\begin{subfigure}[t]{0.23\textwidth}
			\includegraphics[width=\textwidth]{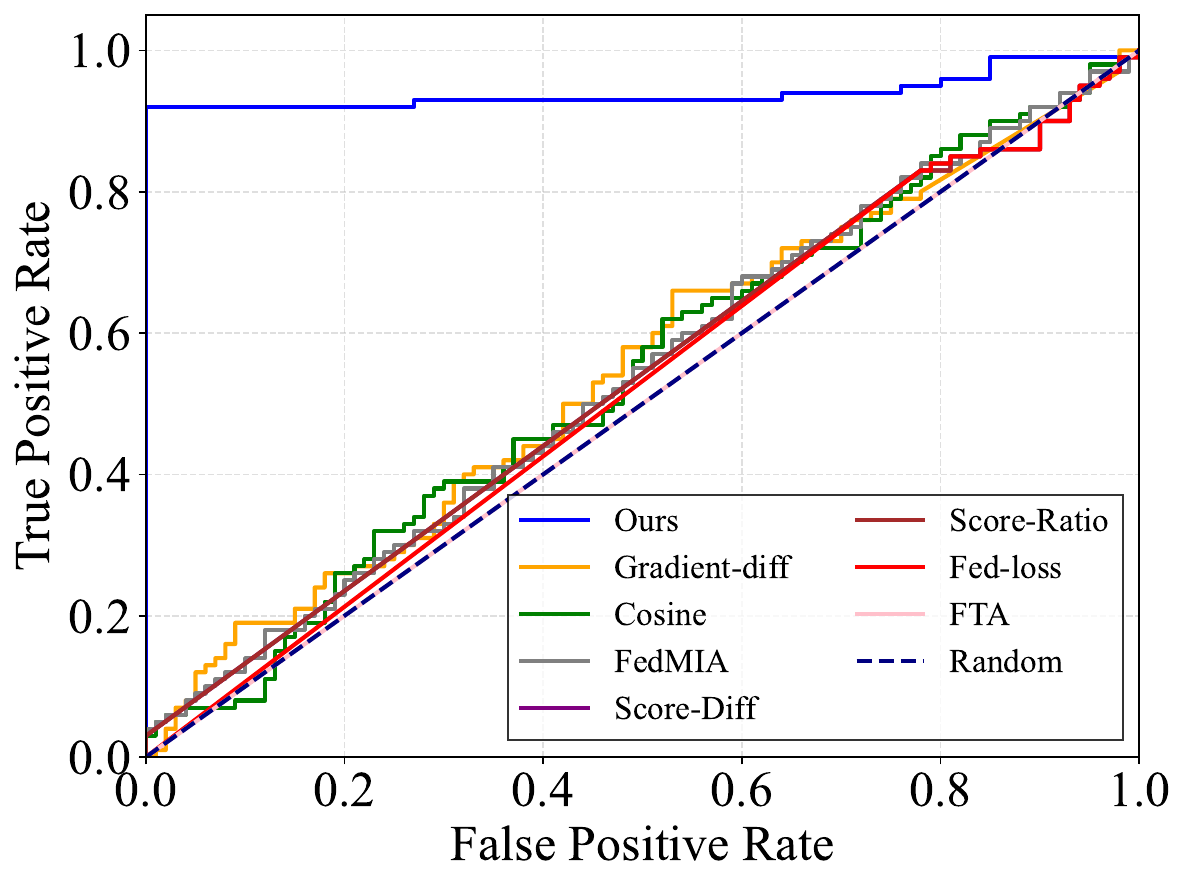} 
			\caption{Llama3-8B on Yelp.}
			\label{fig:llama_yelp}
		\end{subfigure}
		\hfill
		\begin{subfigure}[t]{0.23\textwidth}
			\includegraphics[width=\textwidth]{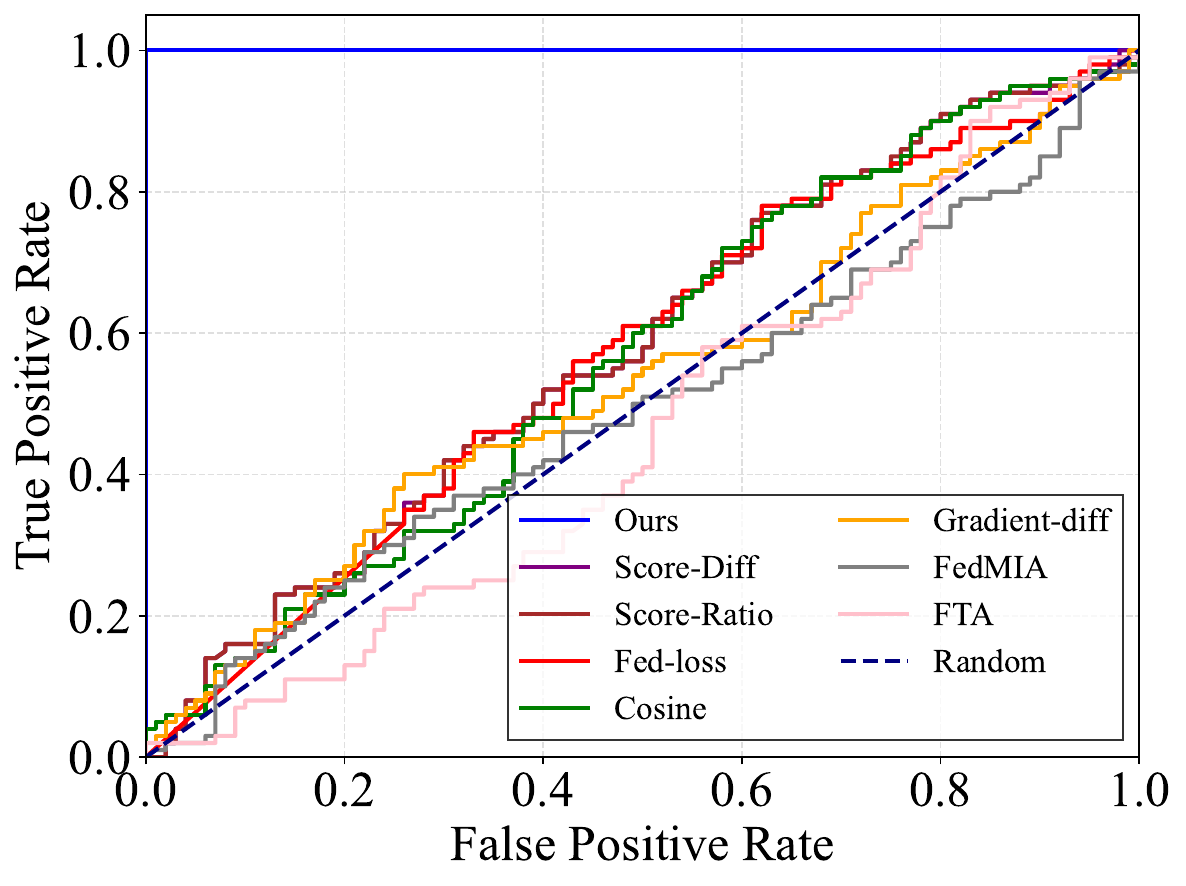}
			\caption{Llama3-8B on SST.}
			\label{fig:llama_sst}
		\end{subfigure}
		\hfill
		\begin{subfigure}[t]{0.23\textwidth}
			\includegraphics[width=\textwidth]{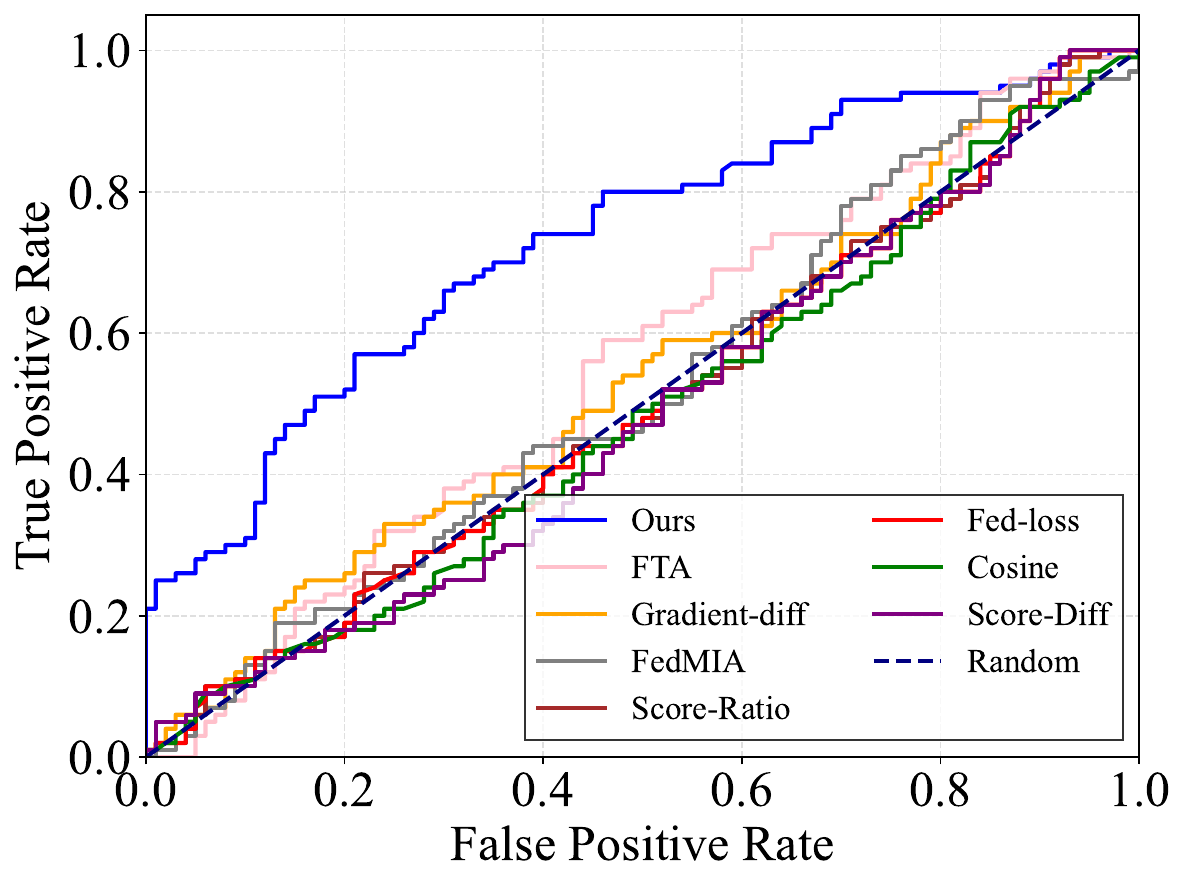}
			\caption{Llama3-8B on IMDB.}
			\label{fig:llama_imdb}
		\end{subfigure}
		
		\begin{subfigure}[t]{0.23\textwidth}
			\includegraphics[width=\textwidth]{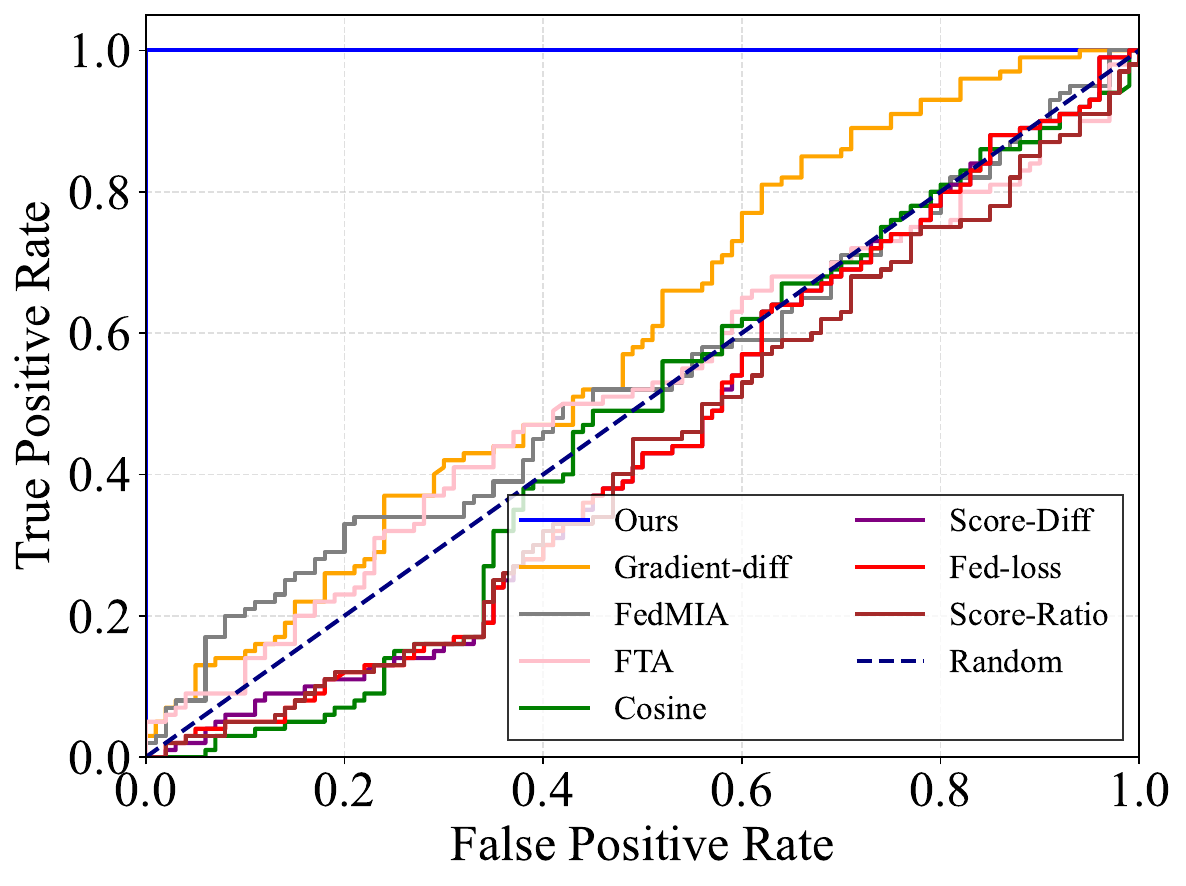}
			\caption{Qwen2.5-14B on CoLA.}
			\label{fig:qwen_cola}
		\end{subfigure}
		\hfill
		\begin{subfigure}[t]{0.23\textwidth}
			\includegraphics[width=\textwidth]{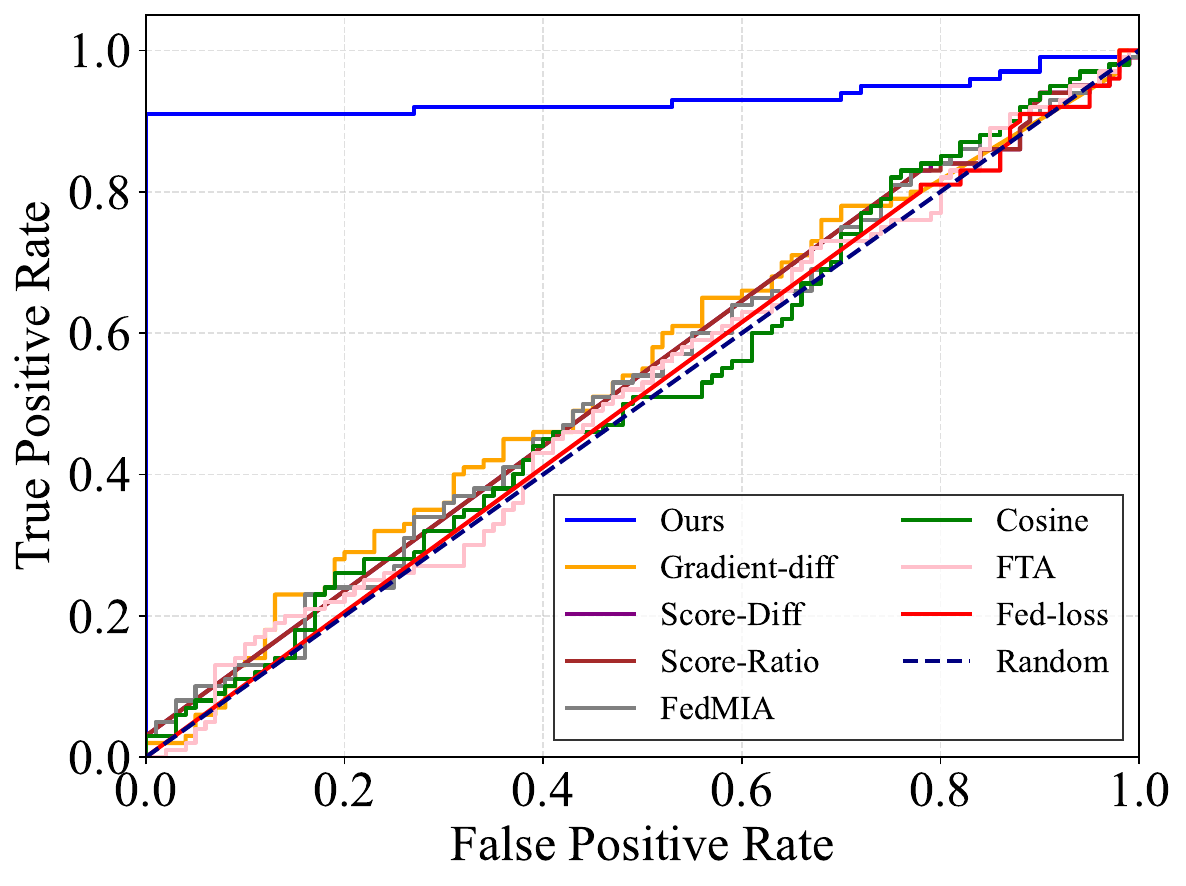} 
			\caption{Qwen2.5-14B on Yelp.}
			\label{fig:qwen_yelp}
		\end{subfigure}
		\hfill
		\begin{subfigure}[t]{0.23\textwidth}
			\includegraphics[width=\textwidth]{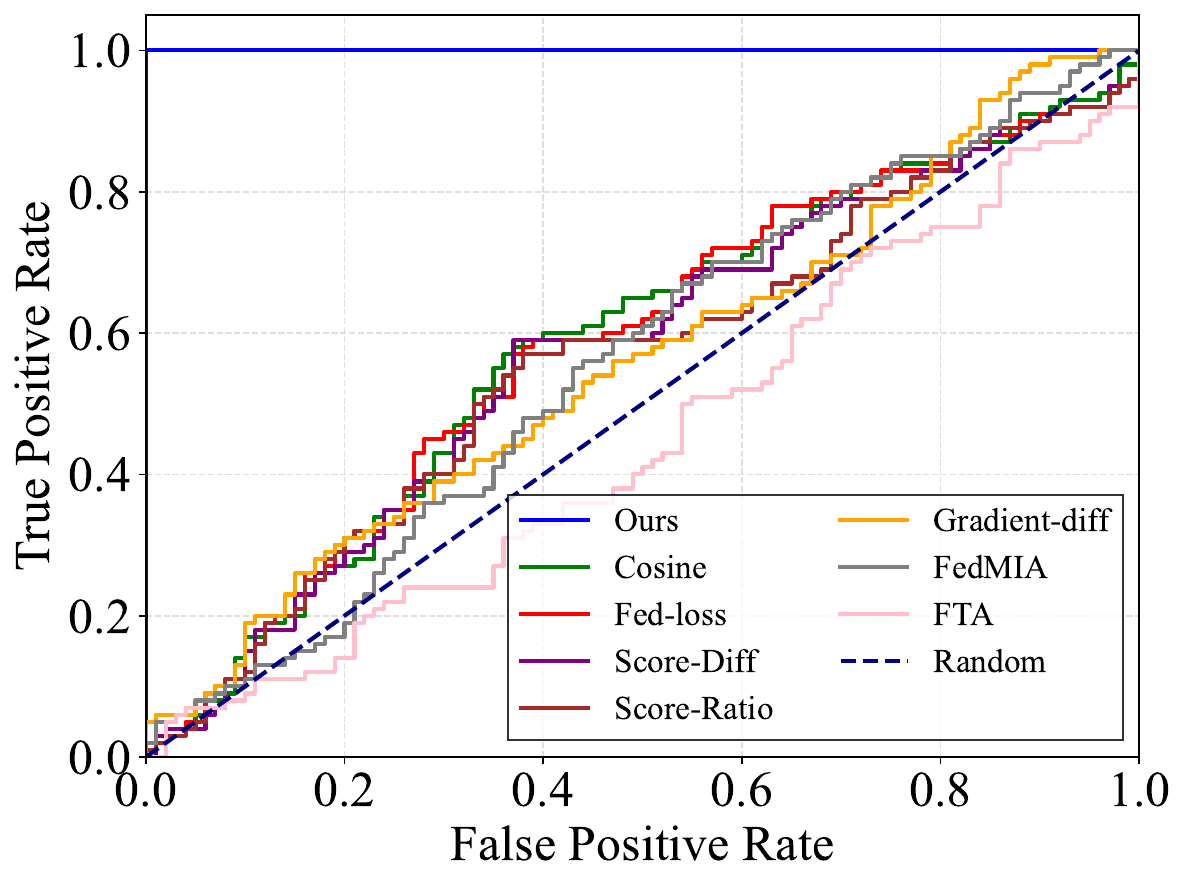}
			\caption{Qwen2.5-14B on SST.}
			\label{fig:qwen_sst}
		\end{subfigure}
		\hfill
		\begin{subfigure}[t]{0.23\textwidth}
			\includegraphics[width=\textwidth]{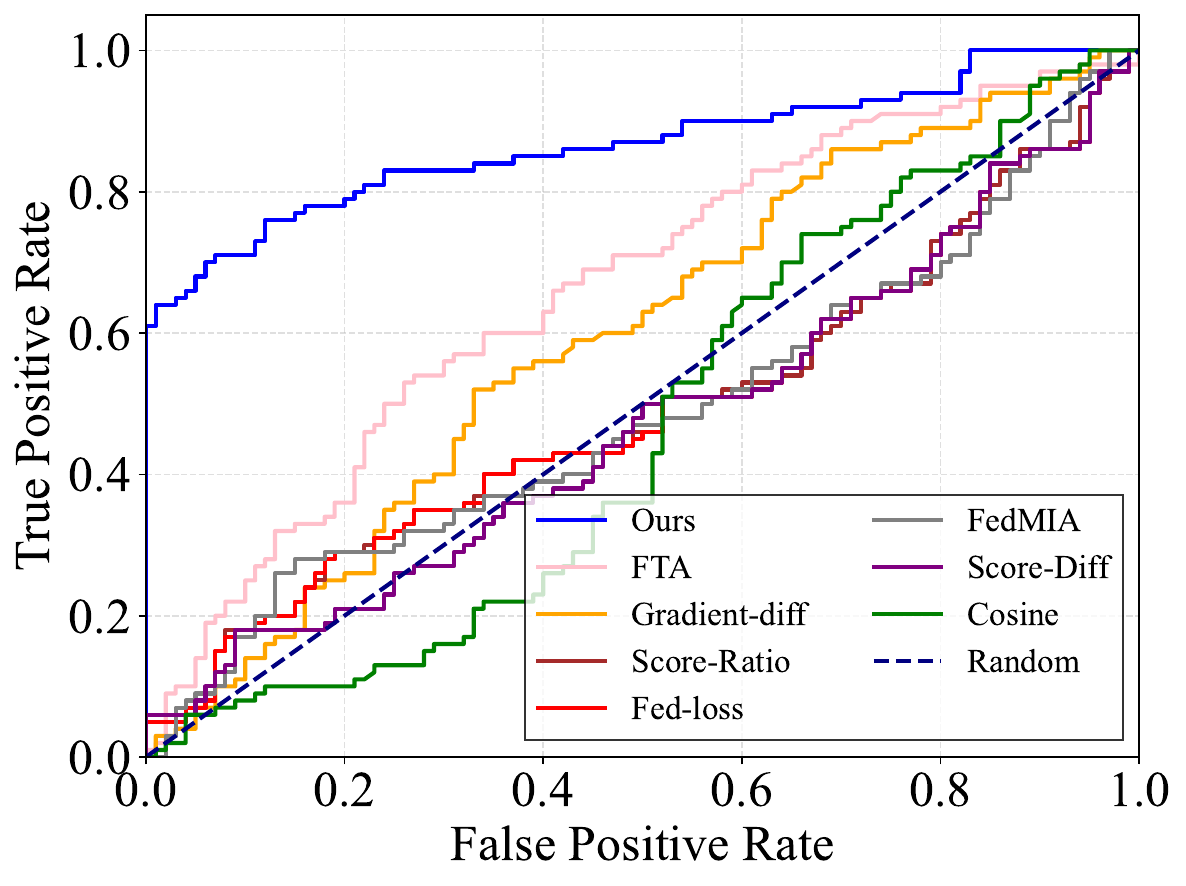}
			\caption{Qwen2.5-14B on IMDB.}
			\label{fig:qwen_imdb}
		\end{subfigure}
		
		\caption{Comparison of attack performance across different models and datasets using ROC curves.}
		\label{fig:roc_comparison}
		\vspace{-0.3cm}
	\end{figure*}
	
	\para{FedLLM Setup.} Unless otherwise specified, we assume a system with 30 clients that collaboratively train a global model in a synchronous manner under the coordination of a central server. The training data is uniformly and randomly partitioned among the clients. Attack performance is typically evaluated at the 50-th communication round.

	\para{Baselines.} We evaluate our attack against the seven most representative MIAs introduced in \S\ref{Baseline} as competing baselines, all under identical experimental conditions. The seven MIAs are FedLoss~\cite{ICLR_MIA}, Cosine~\cite{ICLR_MIA}, Gradient-Diff~\cite{ICLR_MIA}, Score-Diff~\cite{ref_MIA2}, Score-Ratio~\cite{ref_MIA2}, FTA~\cite{FTA}, and FedMIA~\cite{FedMIA}.
	
	\para{Evaluation Metrics.} In each evaluation trial, we perform membership inference on one positive sample (\ie a sample in the current training batch) and one negative sample (\ie a non-member sample). Each evaluation is repeated 100 times, ensuring that the total number of positive predictions satisfies \(T_p + F_p = 100\) and the total number of negative predictions satisfies \(T_n + F_n = 100\). The attack performance is comprehensively assessed using the Receiver Operating Characteristic (ROC) curve and its corresponding Area Under the ROC Curve (AUC) metric.

	\subsection{Comparison of Baselines}\label{main}
	We evaluate \texttt{ProjRes} against seven baseline membership inference attacks (MIAs) across four datasets and four LLMs, using Adapter-based PEFT for fine-tuning. Following \cite{adapter_structure}, a trainable Adapter module (down-projection ratio 2) is inserted after each Transformer layer, with all other model parameters frozen. Each client uses a local batch size of 16, a common setting in FedLLM training \cite{Batch_size1, Batch_size2}. The resulting ROC curves, shown in Fig.~\ref{fig:roc_comparison}, demonstrate that \texttt{ProjRes} consistently outperforms all baselines, with curves closely approaching the top-left corner of the ROC space, indicating superior discrimination between member and non-member samples.	These results are further confirmed by the AUC scores in Table~\ref{acu_score}: \texttt{ProjRes} achieves the highest AUC in all settings, often reaching or nearing 1.0, and significantly surpassing existing methods. Relative improvements over the best baseline range from 9.65\% to 75.75\%, highlighting its substantial advantage. As discussed in \S\ref{Baseline}, this stems from the misalignment of conventional MIA designs with the architectural and training dynamics of FedLLMs, whereas \texttt{ProjRes} is specifically tailored to exploit gradient semantics in such settings.

	\begin{table*}[!t]
	\centering
	\renewcommand{\arraystretch}{1.2}
	\setlength{\tabcolsep}{1.9pt}
	\caption{AUC scores corresponding to the ROC curves in Fig.~\ref{fig:roc_comparison}. Higher AUC indicates better attack performance. ``Relative Gain'' shows the percentage improvement of \texttt{ProjRes} over the \underline{best baseline} against each model on each dataset.}
	\label{tab:ours_vs_best}
	\begin{tabular}{l cccc cccc cccc cccc}
		\hline
		\multicolumn{1}{l}{\multirow{2}{*}{\textbf{Method}}} & \multicolumn{4}{c}{\textbf{BERT-Base}} & \multicolumn{4}{c}{\textbf{GPT2-Large}} & \multicolumn{4}{c}{\textbf{Llama3-8B}} & \multicolumn{4}{c}{\textbf{Qwen2.5-14B}} \\
		\cmidrule(lr){2-5} \cmidrule(lr){6-9} \cmidrule(lr){10-13} \cmidrule(lr){14-17}
		& CoLA    & Yelp    & SST     & IMDB     & CoLA    & Yelp    & SST     & IMDB      & CoLA    & Yelp    & SST     & IMDB      & CoLA   & Yelp     & SST     & IMDB \\ \hline
		FedMIA        & \underline{0.748}   & \underline{0.611}   & \underline{0.713}   & \underline{0.736}    & 0.591   & 0.549   & \underline{0.720}   & \underline{0.657}     & 0.545   & 0.537   & 0.494   & 0.521     & 0.530   & 0.530   & 0.557   & 0.487 \\ 
		Cosine        & 0.745   & 0.582   & 0.665   & 0.627    & 0.641   & \underline{0.558}   & 0.709   & 0.526     & 0.450   & 0.539   & 0.567   & 0.484     & 0.464   & 0.518   & \underline{0.584}   & 0.466 \\ 
		Fed-loss      & 0.685   & 0.541   & 0.700   & 0.561    & 0.684   & 0.509   & 0.643   & 0.559     & 0.444   & 0.520   & 0.570   & 0.497     & 0.444   & 0.508   & 0.582   & 0.490 \\ 
		Score-Ratio   & 0.674   & 0.553   & 0.612   & 0.568    & \underline{0.696}   & 0.535   & 0.595   & 0.567     & 0.451   & 0.532   & 0.579   & 0.498     & 0.428   & 0.534   & 0.558   & 0.490 \\ 
		Score-Diff    & 0.636   & 0.556   & 0.596   & 0.532    & 0.671   & 0.539   & 0.594   & 0.513     & 0.447   & 0.532   & \underline{0.580}   & 0.484     & 0.445   & 0.534   & 0.570   & 0.471 \\ 
		Gradient-diff & 0.565   & 0.574   & 0.616   & 0.510    & 0.624   & 0.531   & 0.553   & 0.509     & \underline{0.569}   & \underline{0.546}   & 0.536   & 0.533     & \underline{0.593}   & \underline{0.543}   & 0.557   & 0.592 \\ 
		FTA           & 0.490   & 0.446   & 0.385   & 0.459    & 0.500   & 0.521   & 0.412   & 0.592     & 0.438   & 0.500   & 0.462   & \underline{0.553}     & 0.518   & 0.515   & 0.449   & \underline{0.662} \\   
		\textbf{Ours}       & \textbf{1.000}   & \textbf{0.894}   & \textbf{1.000}   & \textbf{0.807}    & \textbf{1.000}   & \textbf{0.880}   & \textbf{1.000}   & \textbf{0.819}     & \textbf{1.000}   & \textbf{0.940}   & \textbf{1.000}   & \textbf{0.731}     & \textbf{1.000}   & \textbf{0.933}   & \textbf{1.000}   & \textbf{0.867} \\  \hline
		Relative Gain & 33.68\% & 46.32\% & 40.25\% & \textbf{9.65\%}   & 43.68\% & 36.60\% & 38.89\% & 24.66\%   & \textbf{75.75\%} & 72.16\% & 72.41\% & 32.19\%   & 68.63\% & 71.82\% & 71.23\% & 30.97\% \\  \hline
	\end{tabular}
	\label{acu_score}
	\vspace{-0.2cm}
\end{table*}

	\textit{Experimental Analysis.} Next, we provide a detailed analysis of \texttt{ProjRes}’s experimental results. In Fig. \ref{fig:roc_comparison}, each row of subplots presents evaluation results across different datasets, \ie CoLA, Yelp, SST, and IMDB (from left to right), using the same LLM. These datasets have progressively increasing average text lengths, resulting in a higher number of tokens per batch under the same batch size. Each column of subplots shows results for the same dataset across different LLMs, \ie BERT-Base, GPT2-Large, Llama3-8B, and Qwen2.5-14B, arranged from top to bottom in order of increasing model scale. Accordingly, the dimensionality of the hidden embeddings increases with model size.
	
	Further examination of Fig. \ref{fig:roc_comparison} and Table \ref{acu_score} reveals that, for a fixed model, the ROC curves gradually degrade and the AUC decreases as input sequence length increases (from CoLA to IMDB). Conversely, for datasets like Yelp and IMDB, the attack performance of \texttt{ProjRes} improves as the model’s embedding dimension grows. This aligns with the analysis in Appendix \ref{Basic_Capability}: larger embedding dimensions and a greater number of neurons in the trainable module amplify the privacy leakage signals exploited by \texttt{ProjRes}, enhancing its effectiveness.
	
	Interestingly, the results indicate that \texttt{ProjRes} achieves AUC scores of 1.0 on both CoLA and SST. However, the average sequence length of Yelp lies between CoLA and SST, its AUC falls slightly below 1.0. This discrepancy arises from the padding mechanism used in LLMs during batch processing: sequences are padded to the length of the longest sequence in a batch. Yelp exhibits higher length variance than SST, leading to more tokens processed per batch, effectively simulating longer input sequences and slightly reducing attack performance. In the next section, we extend this investigation to evaluate \texttt{ProjRes} across FedLLMs using different PEFT strategies.


	\subsection{Impact of FedLLM Fine-Tuning Strategy}
	\label{Diverse_Implementation}
	
	In this section, we evaluate \texttt{ProjRes} across five FedLLM configurations with different trainable modules: 1) Adapter (down-projection ratio 1.2); 2) Adapter (ratio 2); 3) LoRA (ratio 2, applied in parallel with Qproj); 4) Qproj; and 5) DownProj. These modules vary primarily in their neuron count and placement within the Transformer layers. Table~\ref{trainable_modules} summarizes their sizes for Qwen2.5-14B.
	Experiments are conducted on SST with increasing batch sizes. Due to the dataset's moderate and consistent sequence lengths, the total number of tokens per batch, denoted as $p$ in \S\ref{sec-3-4}, rises steadily with batch size, enabling clearer observation of attack performance trends. Results in Table~\ref{AUC_batch_size} show that \texttt{ProjRes} is effective across all configurations and reveal a clear correlation: larger trainable modules lead to higher privacy leakage risk, indicating that model capacity in the exposed components directly impacts vulnerability.

	\begin{table}[!t]
		\renewcommand{\arraystretch}{1.2}
		\setlength{\tabcolsep}{4pt}
		\centering
		\caption{The number of neurons in different trainable modules in Qwen2.5-14B.}
		\label{trainable_modules}
		\begin{threeparttable}
			{\begin{tabular}{cccccc}\toprule
					\makecell{\textbf{Trainabel}\\\textbf{Module}}   & DownProj    & Qproj   & Adapter1.2   & Adapter2  & LoRA2            \\\midrule
					\makecell{\textbf{\# of}\\\textbf{Neurons}}    &  5120       & 5120    & 4267         & 2560      & 2560             \\              \bottomrule
			\end{tabular}}    
		\end{threeparttable}
		\vspace{-0.3cm}
	\end{table}

	\begin{table*}[!t]
		\renewcommand{\arraystretch}{1.2}
		\setlength{\tabcolsep}{4pt}
		\centering
		\caption{AUC of \texttt{ProjRes} against Qwen2.5 with different trainable module configurations under varying batch sizes.}
		\label{AUC_batch_size}
		\begin{threeparttable}
			\begin{tabular}{l *{15}{c}} \hline
				\multirow{2}{*}{\textbf{Trainable Module}} & \multicolumn{15}{c}{\textbf{Batch Size}} \\ \cmidrule(lr){2-16}
				& 1     & 2     & 4     & 8     & 16    & 32    & 64    & 128    & 256  & 384   & 512   & 640   & 786   & 896   & 1024 \\ \hline
				DownProj   & 1.000 & 1.000 & 1.000 & 1.000 & 1.000 & 1.000 & 1.000 & 0.995 & 0.992 & 0.967 & 0.970 & 0.967 & 0.970 & 0.963 & 0.963 \\
				Qproj      & 1.000 & 1.000 & 1.000 & 1.000 & 1.000 & 1.000 & 1.000 & 0.993 & 0.987 & 0.953 & 0.934 & 0.953 & 0.934 & 0.943 & 0.953 \\
				Adapter1.2 & 1.000 & 1.000 & 1.000 & 1.000 & 1.000 & 1.000 & 1.000 & 0.996 & 0.972 & 0.980 & 0.801 & 0.980 & 0.801 & 0.946 & 0.744 \\
				Adapter2   & 1.000 & 1.000 & 1.000 & 1.000 & 1.000 & 1.000 & 1.000 & 0.983 & 0.947 & 0.711 & 0.768 & 0.711 & 0.768 & 0.696 & 0.637 \\
				LoRA2      & 1.000 & 1.000 & 1.000 & 1.000 & 1.000 & 1.000 & 1.000 & 0.998 & 0.988 & 0.901 & 0.957 & 0.901 & 0.957 & 0.949 & 0.958 \\ \hline
			\end{tabular}
		\end{threeparttable}
		\vspace{-0.5cm}
	\end{table*}
	


	Furthermore, we observe that as the batch size increases, the attack performance of \texttt{ProjRes} against Qwen2.5-14B equipped with Adapter2 degrades most noticeably. Specifically, at a batch size of 1024, its AUC score is 0.32 lower than that of Qwen2.5-14B with LoRA2; even Adapter1.2, which contains more neurons, exhibits an AUC that is 0.22 lower. This behavior arises from the activation functions in the Adapter architecture: when certain activations remain inactive, the corresponding neurons do not contribute, effectively reducing the number of usable neurons. Importantly, the activation status of neurons evolves dynamically during training. To further investigate this phenomenon, we systematically evaluate \texttt{ProjRes}’s attack performance at different training stages in the next section.
	
	\subsection{Imapact of Training Stages}
	\label{sub_AUC_epoch}
	
	To assess \texttt{ProjRes}'s attack performance across different stages of FedLLM training, we fine-tune BERT-Base with a trainable Adapter (reduction factor of 2) on CoLA as the downstream task. Fine-tuning BERT on CoLA is both a practical task choice and, as indicated by Fig. \ref{fig:roc_comparison}, a setting where baseline methods exhibit stronger attack performance, enabling a more meaningful comparison. During federated training, each client uses a local batch size of 16 and a learning rate of $1 \times 10^{-4}$. MIAs are launched after training epochs \{1, 10, 20, 40, 100, 200, 400\}, covering the early, middle, and late stages of training. The corresponding evaluation results are presented in Fig. \ref{AUC_epoch}.
	
	Fig. \ref{AUC_epoch} shows that as training progresses, the AUC scores of all baseline methods generally decline, whereas \texttt{ProjRes} consistently achieves perfect attack performance (AUC = 1.0) across all training stages, markedly outperforming every baseline. These results indicate that \texttt{ProjRes}'s attack performance remains stable and does not degrade as model training advances. This behavior stems from a fundamental difference in attack design: existing baseline methods rely on dynamic signals closely tied to the training process, for example, Fed-loss depends on the model's output loss, whose discriminative power diminishes as the model converges. In contrast, \texttt{ProjRes} leverages a strict algebraic relationship between gradients and sample embedding vectors, which remains invariant throughout training. This ensures both robustness and superior attack efficacy across all stages of model fine-tuning. More results are provided in Appendix \ref{c1} to further corroborating the findings.
	
	\begin{figure}[!t]
		\centering
		\includegraphics[width=0.41\textwidth]{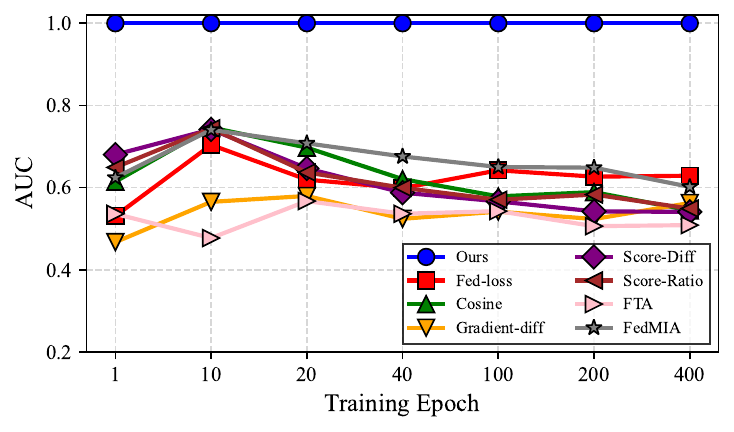} %
		\caption{AUC scores of MIAs against Adapter-based BERT-base across different training epochs on CoLA.}
		\label{AUC_epoch}
		\vspace{-0.3cm}
	\end{figure} 
	
	\subsection{Impact of Trainable Layer Positions}
	\label{layer_auc}
	We aim to investigate the impact of trainable layer positions on \texttt{ProjRes}. The experimental setup follows that described in  \S\ref{sub_AUC_epoch}, with the sole difference being the insertion strategy of the Adapter within BERT-Base. In this experiment, a single Adapter module is embedded into different Transformer layers of BERT-Base to create 12 distinct trainable configurations. As illustrated in Fig. \ref{AUC_layer}, \texttt{ProjRes} consistently achieves a perfect AUC of 1.0 across all 12 Adapter insertion positions, indicating that it can perform precisely accurate membership inference regardless of where the trainable module is located. This finding highlights the remarkable robustness and generalization capability of the proposed method. The key reason lies in \texttt{ProjRes}'s design: it is grounded in a strict algebraic relationship between gradient updates and the hidden embeddings of training samples. This relationship remains largely invariant to the specific layer where the trainable module is inserted, ensuring stable attack performance across different architectural configurations. Additional evidence supporting the conclusion is provided in Appendix \S\ref{c2}.

	
	\begin{figure}[!t]
		\centering
		\includegraphics[width=0.41\textwidth]{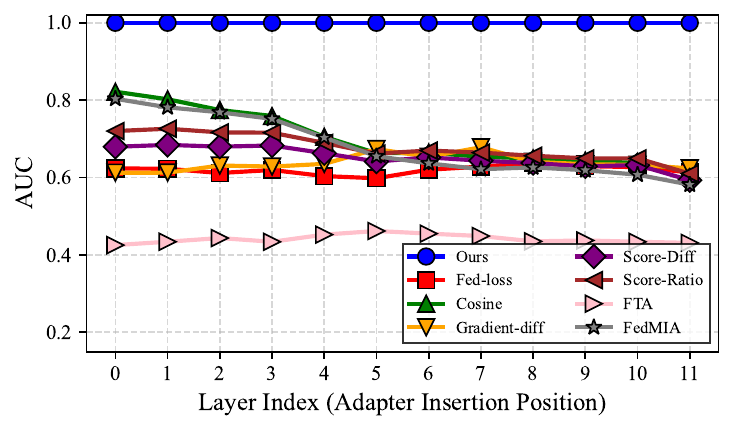} %
		\caption{The AUC scores of \texttt{ProjRes} against BERT-Base with a single adapter inserted at different Transformer layers on CoLA.}
		\label{AUC_layer}
		\vspace{-0.3cm}
	\end{figure}
	
		    \begin{figure}[!t]
		\centering
		\includegraphics[width=0.41\textwidth]{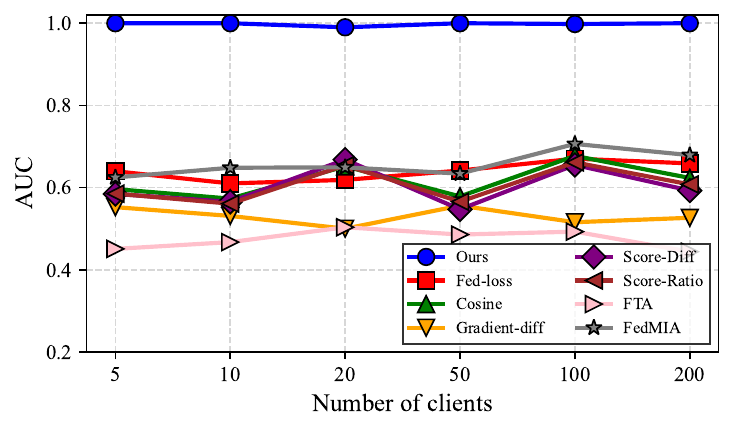}
		\caption{AUC scores of MIAs against Adapter-based BERT-base on CoLA under varying numbers of clients.}
		\label{AUC_client}
		\vspace{-0.3cm}
	\end{figure} 
	
	
	\subsection{Cost Evaluation}
	We compare \texttt{ProjRes} with baseline methods in terms of memory consumption and runtime overhead to evaluate attack efficiency. Table \ref{memory_overhead} summarizes the memory requirements of each approach. For clarity, the memory cost of model loading is reported separately in the ``Model Only'' row, while the remaining rows indicate the additional memory required to execute each attack. Due to hardware limitations, full model architectures were used only for BERT-Base and GPT2-Large, whereas LLaMA3-8B and Qwen2.5-14B were evaluated using only their first two Transformer layers. As shown, although \texttt{ProjRes} does not have the lowest memory footprint among all methods, its additional memory usage is modest and remains well within an acceptable range. Table \ref{time_overhead} presents the time overhead of various MIAs across the four models. The results show that \texttt{ProjRes} incurs noticeably higher execution times on BERT-Base and GPT2-Large compared to other approaches. This is primarily because \texttt{ProjRes} performs attacks independently on each Transformer layer (as detailed in \S\ref{layer_auc}). For BERT-Base and GPT2-Large, which utilize all Transformer layers, this results in multiple iterative attacks, leading to a significant increase in computational overhead.

	\begin{table}[!t]
		\renewcommand{\arraystretch}{1.2}
		\setlength{\tabcolsep}{4pt}
		\centering
		\caption{Memory consumption (in MB) of various MIA against the four models on CoLA. The ``Model Only'' row denotes the memory overhead of loading the model, while the other rows represent the additional memory overhead incurred by the respective methods.}
		\label{memory_overhead}
		\begin{threeparttable}
			{\begin{tabular}{lcccc}\toprule
					\textbf{Method}      & \textbf{BERT-Base} & \textbf{GPT2-Large}   & \textbf{Llama3-8B}   & \textbf{Qwen2.5-14B}     \\\midrule
					Model Load   &  461.54              & 3286.59                 &  5817.10               & 5526.25     \\
					FedMIA       &  109.148             & 902.509                 &  510.231               & 800.157                  \\
					Cosine       &  82.097              & 677.982                 &  382.186               & 600.100                \\
					Fed-loss     &  0.004               & 0.003                   &  0.003                 & 0.003                \\ 
					Score-Ratio  &  0.008               & 0.007                   &  0.007                 & 0.007                \\ 
					Score-Diff   &  0.008               & 0.007                   &  0.007                 & 0.007                \\ 
					Gradient-Diff&  82.026              & 677.98                  &  382.187               & 600.101                \\ 
					FTA          &  0.007               & 0.006                   &  0.006                 & 0.006                \\
					Ours      &  0.014               & 0.013                   &  0.033                 & 0.041              \\\bottomrule
			\end{tabular}}       
		\end{threeparttable}
		\vspace{-0.2cm}
	\end{table}

		\begin{table}[!t]
		\renewcommand{\arraystretch}{1.2}
		\setlength{\tabcolsep}{4pt}
		\centering
		\caption{Time overhead (in seconds) of various MIAs on CoLA across four models.}
		\label{time_overhead}
		\begin{threeparttable}
			{\begin{tabular}{lcccc}\toprule
					\textbf{Method}     &  \textbf{BERT-Base} & \textbf{GPT2-Large}   & \textbf{Llama3-8B}   & \textbf{Qwen2.5-14B}      \\\midrule
					FedMIA       &  0.076               & 0.175                   &  0.067                 & 0.090                  \\
					Cosine       &  0.074               & 0.299                   &  0.056                 & 0.073                \\
					Fed-loss     &  0.022               & 0.452                   &  0.738                 & 0.657                \\ 
					Score-Ratio  &  0.037               & 0.593                   &  0.804                 & 0.671                \\ 
					Score-Diff   &  0.037               & 0.598                   &  0.748                 & 0.706                \\ 
					Gradient-Diff&  0.076               & 0.164                   &  0.063                 & 0.065                \\ 
					FTA          &  0.079               & 0.669                   &  0.804                 & 0.806                \\
					Ours      &  0.435               & 2.136                   &  0.434                 & 0.557              \\\bottomrule  
			\end{tabular}}       
		\end{threeparttable}
		\vspace{-0.3cm}
	\end{table}

	\subsection{Impact of Hyperparameters}
	  Here, we aim to evaluate the performance of \texttt{ProjRes} under various hyperparameter settings, \ie the number of clients and the threshold $\tau$, to comprehensively assess its adaptability and robustness across diverse scenarios.

	\para{Impact of the number of clients.} We use the BERT Base model for evaluation on CoLA, and the overall experimental setup was largely consistent with the setup described in \S\ref{main}, with the main difference being the configuration of the number of clients. Specifically, we set the number of clients to $\{5, 10, 20, 50, 100, 200\}$. Due to computational resource limitations, the attack initiation point is adjusted to occur at the 20-th communication round. The results are presented in Fig.~\ref{AUC_client}, which indicate that the performance of \texttt{ProjRes} is independent of the number of clients. This observation aligns with its design principle: \texttt{ProjRes} performs the attack solely based on the gradients of the target client and does not rely on gradient information from other clients.
	
	\para{The impact of threshold $\tau$.} As shown in Eq.~\eqref{eq-27}, \texttt{ProjRes} uses a threshold $\tau$ to distinguish member from non-member samples. Table~\ref{tab:Acc_FPR_t} reports attack accuracy (Acc) and false positive rate (FPR) under different $\tau$ values on CoLA and Yelp, with all other settings as in \S\ref{main}. The results show that an appropriate $\tau$ balances high Acc and low FPR. If $\tau$ is too small, Acc drops due to missed members; if too large, FPR rises as more non-members are misclassified as members. Thus, tuning $\tau$ is critical. In practice, we recommend using $10^{-2}$ or selecting $\tau$ via validation on non-member data.
	
	\begin{table}[!t]
		\renewcommand{\arraystretch}{1.2}
		\setlength{\tabcolsep}{2.5pt}
		\centering
		\caption{AUC scores of ProjRes against Adapter-based Bert-Base under various defenses on CoLA.}
		\label{tab:Acc_FPR_t}
		\begin{threeparttable}
			\begin{tabular}{llcccccc} \hline
				\multirow{2}{*}{\textbf{Dataset}} & \multirow{2}{*}{\textbf{Metric}} & \multicolumn{6}{c}{\textbf{$\tau$}}  \\ \cmidrule(lr){3-8}
				&                 & 5e-2           & 1e-2           & 5e-3         & 1e-3          & 5e-4      & 1e-4 \\ \hline
				\multirow{2}{*}{CoLA}    & Acc             & 92.50\%        & 100\%          & 100\%        & 100\%         & 100\%     & 98.50\%     \\
				& FPR             & 15.00\%        & 0\%            & 0\%          & 0\%           & 0\%       & 0\%   \\ \midrule
				\multirow{2}{*}{Yelp}    & Acc             & 85.00\%        & 92.00\%        & 92.00\%      & 92.00\%       & 92.00\%   & 87.50\%     \\
				& FPR             & 16.00\%        & 0\%            & 0\%          & 0\%           & 0\%       & 0\%      \\\hline
			\end{tabular}
		\end{threeparttable}
		\vspace{-0.2cm}
	\end{table}
	
	\subsection{Attack Performance Evaluation under Active Defense}
	\label{Attack_defense}
    In FedLLMs, computational resources are often limited, making cryptographic methods like homomorphic encryption and secure aggregation impractical due to high overhead. Thus, we focus on two lightweight defenses: differential privacy (DP) \cite{DP} and gradient pruning (GP) \cite{GP}. DP adds noise to gradients, with privacy strength controlled by the noise scale $\sigma$; we evaluate $\sigma \in \{0.01, 0.1, 1, 1.5\}$ and a fixed clipping bound of 1.0. GP reduces leakage by sparsifying gradients, \ie only the largest components are transmitted. We test pruning rates $\beta \in \{70\%, 90\%, 99\%, 99.9\%\}$. Both defenses perturb gradients, potentially harming model utility. Therefore, we assess their impact on test loss and accuracy, enabling a comprehensive analysis of the privacy–utility trade-off and evaluating \texttt{ProjRes}'s effectiveness under practical privacy-preserving conditions.

	\begin{table}[!t]
		\renewcommand{\arraystretch}{1.2}
		\setlength{\tabcolsep}{4pt}
		\centering
		\caption{AUC scores of \texttt{ProjRes} against Adapter-based Bert-Base under various defenses on CoLA.}
		\label{tab:attack_under_DP_GP}
		\begin{threeparttable}
			\begin{tabular}{llcccccc} \hline
				\multirow{2}{*}{\textbf{Defense}} & \multirow{2}{*}{\textbf{Value}} & \multicolumn{5}{c}{\textbf{Batch Size}}  \\ \cmidrule(lr){3-7}
				&                 & 1              & 2              & 4            & 8             & 16 \\ \hline
				\multirow{4}{*}{\textbf{DP ($\sigma$)}}  & $0.01$ & 0.551          & 0.797          & 0.996        & 0.943         & 0.863  \\
				& $0.1$  & 0.582          & 0.568          & 0.534        & 0.532         & 0.524  \\
				& $1$    & 0.490          & 0.520          & 0.501        & 0.522         & 0.507  \\
				& $1.5$  & 0.545          & 0.500          & 0.504        & 0.522         & 0.511  \\ \midrule
				\multirow{4}{*}{\textbf{GP ($\beta$)}}  & $70\%$      & 1.000          & 0.855          & 0.817        & 0.707         & 0.694    \\
				& $90\%$      & 0.954          & 0.788          & 0.714        & 0.589         & 0.550    \\
				& $99\%$      & 0.730          & 0.746          & 0.694        & 0.603         & 0.537    \\
				& $99.9\%$    & 0.609          & 0.779          & 0.664        & 0.580         & 0.513    \\\hline
			\end{tabular}
		\end{threeparttable}
		\vspace{-0.2cm}
	\end{table}
	
Table~\ref{tab:attack_under_DP_GP} summarizes the AUC scores of \texttt{ProjRes} under varying DP and GP settings. The results show that DP only becomes an effective defense when the noise standard deviation exceeds 0.1, but as shown in Table~\ref{tab:loss_accuracy_vertical_style}, this degrades test accuracy below 55.73\%, severely harming model utility. For GP, even at a 99.9\% pruning rate, \texttt{ProjRes} achieves AUC over 0.6 with batch sizes below 4, indicating significant residual leakage. Notably, GP reduces accuracy by only 9.73\%, preserving utility better than DP. However, its protection is highly sensitive to small batch sizes, offering weak guarantees in such cases. These findings highlight a key limitation of current lightweight defenses: they struggle to balance privacy and performance. Results in \S\ref{c3} confirm this trade-off persists across training stages, emphasizing the need for more adaptive, robust privacy mechanisms tailored to FedLLMs.


	\begin{table}[!t]
		\renewcommand{\arraystretch}{1.3}
		\setlength{\tabcolsep}{4pt}
		\centering
		\caption{Loss and accuracy of BERT-Base on CoLA under DP ($\sigma$) and GP ($\beta$). $\uparrow$ and $\downarrow$ denote absolute increase in loss and drop in accuracy relative to baseline.}
		\label{tab:loss_accuracy_vertical_style}
		\begin{tabular}{lccccc}
			\toprule
			\textbf{Defense} & \textbf{Value} & \textbf{Loss} & \textbf{Loss$\uparrow$} & \textbf{Accuracy} & \textbf{Accuracy$\downarrow$} \\
			\midrule
			\multirow{4}{*}{\textbf{DP ($\sigma$)}} 
			& 0.01   & 0.5136 & 0.2796$\uparrow$ & 84.06\% & 7.11\%$\downarrow$ \\
			& 0.1    & 0.7137 & 0.4797$\uparrow$ & 51.03\% & 40.14\%$\downarrow$ \\
			& 1.0    & 0.7129 & 0.4789$\uparrow$ & 50.92\% & 40.25\%$\downarrow$ \\
			& 1.5    & 0.6860 & 0.4520$\uparrow$ & 55.73\% & 34.44\%$\downarrow$ \\
			\midrule
			\multirow{4}{*}{\textbf{GP ($\beta$)}}
			& 70\%   & 0.2812 & 0.0017$\uparrow$ & 90.71\% & 0.46\%$\downarrow$ \\
			& 90\%   & 0.2812 & 0.0472$\uparrow$ & 89.33\% & 1.84\%$\downarrow$ \\
			& 99\%   & 0.3654 & 0.1314$\uparrow$ & 87.61\% & 3.56\%$\downarrow$ \\
			& 99.9\% & 0.4913 & 0.2573$\uparrow$ & 81.42\% & 9.75\%$\downarrow$ \\
			\bottomrule
		\end{tabular}
		\vspace{-0.3cm}
	\end{table}

	\section{Related Work}
	
	\para{Federated Large Language Models.}
	The growing privacy sensitivity of user data and the limited computational capacity of edge users have created a strong demand for distributed fine-tuning of LLMs~\cite{adapter_efficent}. In this context, FL has naturally converged with LLMs, giving rise to FedLLMs, which is a rapidly emerging research focus across academia~\cite{openfedllm, FedLLM_future, FedLLM_FL2} and industry~\cite{federatedscope}. Early studies explored direct integration of LLMs into FL for full-parameter end-to-end fine-tuning \cite{FedBERT, legal1}, but the enormous parameter scales of modern models make such approaches impractical for resource-constrained clients. To overcome this, PEFT techniques, such as Adapters \cite{adapter_structure, adapter_efficent, adapter_Heterogeneous, adapter_Dual-Personalizing, adapter}, LoRA \cite{LoRA, LoRA_FT, LoRA_Personalized, LoRA_Heterogeneous}, and partial parameter tuning \cite{FedRDMA, lin_2024, 10097124}, have been adopted in FedLLMs. These methods train only a small subset of parameters, drastically reducing computation and communication costs while preserving accuracy. 

	\para{MIAs in FL.} MIAs~\cite{MIA_Survey} in FL aim to determine whether a specific sample is part of a client's training set, posing serious privacy risks. Attackers may exploit shared gradients, historical model updates, intermediate computations, or final outputs. Broadly, MIAs fall into two categories: trend-based and update-based. Trend-based attacks \cite{trend_based_MIA1, trend_based_MIA2, learn-based_MIA1, learn-based_MIA2, learn-based_MIA4} track model updates over time, extracting membership-related metrics and comparing their distributions across samples. They often rely on shadow models trained with auxiliary data, but their need for repeated feature collection and retraining leads to high computational costs. To mitigate this, \cite{FTA} compares the rate of change of metrics between member and non-member samples, reducing attack overhead. Update-based attacks, by contrast, use information from a single training round, inferring membership from differences in loss or gradient properties. For example, members tend to cause larger loss variations \cite{ICLR_MIA, ref_MIA2, FedMIA} or exhibit distinct gradient behaviors such as non-orthogonality \cite{ICLR_MIA}. While more efficient, these methods depend heavily on specific model characteristics.
	
	\section{Discussion and Future Work}
	
	This work presents \texttt{ProjRes}, a novel projection–residual–based MIA tailored for FedLLMs. While \texttt{ProjRes} effectively exploits the geometric structure of client-updated gradient subspaces, several open challenges and promising research directions remain.
	
	\para{Limitations.}
	\texttt{ProjRes} reveals privacy leakage risks in the training process of FedLLMs but has two main limitations. First, it does not propose new defense mechanisms that are both effective and efficient. As shown in \S\ref{Attack_defense}, while DP and GP are lightweight privacy-preserving methods, they exhibit a sharp trade-off between protection strength and model utility, stronger defenses lead to significant accuracy degradation. This underscores the need for new privacy mechanisms that better balance security and performance. Second, \texttt{ProjRes} operates as a data-level MIA, identifying membership only when a sample exactly matches one in the training set. Although consistent with the classical MIA definition, this overlooks the semantic generalization ability of LLMs. Developing semantic-level MIAs that capture broader privacy risks in FedLLMs is an important direction for future research.

	\para{Adaptive and Defense-Aware Attacks.}
	Another direction involves designing adaptive \texttt{ProjRes} variants that account for defense mechanisms such as gradient clipping, differential privacy noise, or adversarial regularization. Integrating statistical priors or Bayesian inference into residual modeling could enhance robustness under noise perturbation.
	
	\para{Towards Comprehensive Privacy Auditing.}
	Finally, combining \texttt{ProjRes} with complementary modalities, such as training dynamics, loss curvature, or representational entropy, may lead to more comprehensive privacy auditing tools for large-scale federated systems. Such hybrid frameworks could serve as diagnostic instruments to quantify privacy risks and guide the development of stronger, defense-aware training protocols for future FedLLMs.

	\section{Conclusion}
	
	In this work, we propose \texttt{ProjRes}, a gradient representation residual-based MIA specifically designed for FedLLMs. Our findings reveal that the gradients generated during training from commonly employed fully connected components, \eg Adapters, LoRA modules, query/key/value projection matrices, and MLPs, can result in substantial privacy leakage. Through theoretical analysis, we further demonstrate that the severity of this leakage is strongly correlated with the number of neurons in these fully connected structures: larger neuron counts amplify the leakage signal. Due to the deterministic algebraic relationship between sample features and model gradients, \texttt{ProjRes} achieves stable and highly effective attack performance across various FedLLMs.
	

	
	\clearpage
	\bibliographystyle{IEEEtran}
	\bibliography{sample}
	%
		
		
	
	\appendices
	\begin{figure*}[!t]
		\centering
		\includegraphics[width=0.9\textwidth]{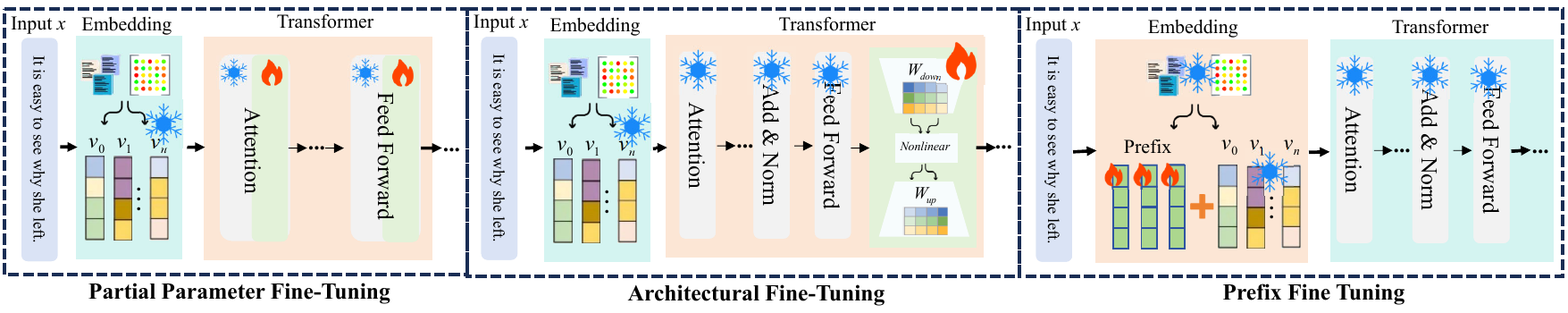} %
		\caption{An overview of three FedLLM fine-tuning strategies.}
		\label{P-MIA-1}
		\vspace{-0.5cm}
	\end{figure*}

	\section*{Ethics Considerations}
    None.
	
	\section{FedLLMs Fine-tuning Strategy}\label{finetuning}
	As shown in Fig. \ref{P-MIA-1}, FedLLMs has three commonly used fine-tuning strategies, which are described in detail below:

	\para{Partial Parameter Fine-Tuning.} In partial fine-tuning, only selected layers~\cite{FedRDMA} or modules of the LLM are updated~\cite{lin_2024, 10097124} (\eg self-attention modules, layer norms, or feed-forward MLP). Let $L_s$ denote the subset of layers chosen for training. Then:
	\begin{equation}
		\theta_{\text{trainable}} = {\theta_i ,|, i \in L_s}, \quad \theta_{\text{frozen}} = {\theta_j ,|, j \notin L_s}.	
	\end{equation}
	Each client $k$ performs local updates via stochastic gradient descent (SGD):
	\begin{equation}
		\theta_{\text{trainable}}^{(t+1, k)} = \theta_{\text{trainable}}^{(t)} - \eta \nabla_{\theta_{\text{trainable}}} \mathcal{L}_k(f_{\theta^{(t)}}),	
	\end{equation}
	where $\eta$ is the learning rate. Precisely because the LLM obtained through joint fine-tuning with such methods is structurally identical to the original model and differs only in a small subset of parameters, it is referred to as partial parameter fine-tuning. This approach reduces computation while retaining adaptability in critical layers~\cite{bitfit_few-shot, bitfit}.

	\para{Architectural Fine-Tuning.} Architectural fine-tuning introduces lightweight trainable submodules into the frozen backbone of the LLM. For adapter tuning~\cite{adapter_structure, adapter_efficent, adapter_Heterogeneous, adapter_Dual-Personalizing, adapter}, small bottleneck layers $h_\phi(\cdot)$ with parameters $\phi$ are inserted into each transformer block:
	\begin{equation}
		h_\phi(\bm x) = \bm W_{\text{up}} , \sigma(\bm W_{\text{down}} \bm x),	
	\end{equation}
	where $\bm W_{\text{down}} \in \mathbb{R}^{n \times p}$, $\bm W_{\text{up}} \in \mathbb{R}^{p \times n}$, and $p \ll n$. The overall model becomes:
	\begin{equation}
		f_{\phi}(\bm x) = h_\phi(\bm x)+\bm x,
	\end{equation}
	where only $\phi$ is updated locally.
	
	Alternatively, Low-Rank Adaptation (LoRA)~\cite{LoRA, LoRA_FT, LoRA_Personalized, LoRA_Heterogeneous} fine-tunes low-rank matrices $ \bm A \in \mathbb{R}^{n \times p} $, $ \bm B \in \mathbb{R}^{p \times n}$ that modify existing weight matrices $\bm W$:
	\begin{equation}
		\bm W' = \bm W + \Delta \bm W = \bm W + \bm B \bm A, 
	\end{equation}
	Thus, the gradient updates are restricted to $\{ \bm A, \bm B\}$:
	\begin{equation}
		\bm A^{(t+1)} = \bm A^{(t)} - \eta \nabla_{\bm A} \mathcal{L}_k, 
		\bm B^{(t+1)} = \bm B^{(t)} - \eta \nabla_{\bm B} \mathcal{L}_k.	
	\end{equation}
	
	\para{Prefix Fine-Tuning.} In prefix fine-tuning, clients learn a small set of trainable prefix embeddings that condition the attention mechanism, without modifying the original LLM parameters~\cite{Parameter-Efficient_Prompt_Tuning, P-tuningV2}. Let $ \bm P \in \mathbb{R}^{p \times d}$ denote the prefix parameters (with $p \ll L$, where $L$ is the sequence length), which are concatenated to the input sequence during self-attention:
	\begin{equation}
		\text{Attn}(\bm Q, \bm K, \bm V; \bm P) = \text{softmax}\left( \frac{[\bm Q; \bm P_{\bm Q}][\bm K; \bm P_{\bm K}]^T}{\sqrt{d_k}} \right) [\bm V; \bm P_{\bm V}],	
	\end{equation}
	where $\bm P_{\bm Q}, \bm P_{\bm K}, \bm P_{\bm V}$ are learned prefix keys and values. The optimization objective per client becomes:
	\begin{equation}
		\min_{\bm P_k} \mathcal{L}_k(f_{\theta_{\text{frozen}}, \bm P_k}),	
	\end{equation}
	and after training, only the prefixes $\bm P_k$ are sent back to the server. The core idea of this approach is to optimize the model's input representations, thereby steering the pretrained model to better adapt to downstream tasks and fully leverage its pre-acquired knowledge and capabilities. However, this design also inherently limits their adaptability to diverse downstream tasks to some extent.
	
	\section{Theoretical Analysis}\label{sec-3-5}
The core of \texttt{ProjRes} lies in extracting the semantic component $ f(\bm{x})^{\text{rec}} $ from model gradients $\nabla \bm{W}$ and comparing it with the true semantics $ f(\bm{x}) $ to perform membership inference. This process relies on two key conditions: (i) the gradients must capture rich semantic information from member samples, governed by the architectural dimensions $n$ and $m$; and (ii) the semantic signal must be effectively disentangled from the gradients, which depends on the number $p$ of input embedding vectors. Given a fixed fully connected architecture (fixed $n$, $m$), this section analyzes the attack capability boundary of \texttt{ProjRes}—specifically, the maximum $p$ for which semantic recovery remains effective. We further investigate whether increasing $n$ and $m$ via upsampling mechanisms, common in LLMs, can expand this boundary, potentially enhancing both semantic expressiveness and disentanglement for stronger attacks.
	
	\subsection{Attack Capability Boundary of \texttt{ProjRes}}
	\label{Basic_Capability}
	
	In FedLLMs, the architecture of the trainable module is fixed during server initialization, whereas the number of hidden embedding vectors in client's training data is changing. Consequently, this section analyzes the maximum feasible value of \( p \) under given values of \( m \) and \( n \).
	
	\begin{theorem}
		Suppose \texttt{ProjRes} is effective, the maximum number $p_\text{max}$ of hidden embedding vectors fed into the fully connected layer and the layer’s architectural parameters \( n \) and \( m \) (where \( n \) and \( m \) denote the input and output neuron counts, respectively) must satisfy the following constraint:
		\begin{equation}
			p_\text{max} \leq \min(n-1, m)	
		\end{equation}
		This constraint ensures that the subspace spanned by the gradients of the fully connected layer remains full-rank and uniquely determined by the member samples. If $p_\text{max}$ exceeds either \( n \) or \( m \), the embedding vectors become linearly dependent in the gradient space, resulting in projection residuals that no longer preserve meaningful membership distinctions.
	\end{theorem}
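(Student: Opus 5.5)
The argument runs through the rank identities of Eqs.~\eqref{eq:gradient}--\eqref{eq:rank_x}, reading ``\texttt{ProjRes} is effective'' as two necessary conditions. \emph{Recoverability}: the gradient subspace must coincide with the span of the inputs, $\operatorname{Span}(\nabla\bm W)=\operatorname{Span}(\bm X)$ as in Eq.~\eqref{eq:Span_x_w}, with all $p$ embeddings linearly independent. \emph{Discriminability}: that subspace must be a proper subspace of $\mathbb{R}^n$, so that a generic non-member embedding has nonzero projection residual. We show that each condition forces one of the two bounds on $p$, and then combine them.

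First, the bound $p\le n-1$. We assume genericity: the rows of $\bm X$ and of $\bm\alpha$ are in general position, as for embeddings drawn from a continuous distribution. Suppose $p\ge n$. Then $\operatorname{rank}(\bm X)=n$ by Eq.~\eqref{eq:rank_x}, so $\operatorname{Span}(\bm X)=\mathbb{R}^n$. Even if $\operatorname{Span}(\nabla\bm W)$ still equals $\operatorname{Span}(\bm X)$, it is the whole ambient space. In that case $\Pi_{\mathcal S}$ is the identity, so by Eq.~\eqref{eq:residual} we get $r=0$ for every $\bm x$, members and non-members alike. The classifier \eqref{eq-27} then labels every sample a member, contradicting effectiveness. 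Hence $p\le n-1$. This is the strict-inequality case $p<n$ singled out in \S\ref{sec-3-3}. There we also note that for $p<n$ a random vector lies in $\mathcal S$ with probability zero, which is what makes the residual of a non-member generically positive.

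Second, the bound $p\le m$. Here we analyze the factorization $\nabla\bm W=\bm\alpha\bm X$, taken in the orientation whose row space lives in $\mathbb{R}^n$, i.e.\ $\bm\alpha^\top\bm X$ with $\bm\alpha\in\mathbb{R}^{p\times m}$. If $p>m$, Eq.~\eqref{eq:rank_a} gives $\operatorname{rank}(\bm\alpha)=m<p$. The gradient then has rank at most $m$, and its span is an $m$-dimensional subspace of the $p$-dimensional $\operatorname{Span}(\bm X)$, fixed by the unknown coefficients $\bm\alpha$. The rows of $\bm X$ are no longer each contained in $\mathcal S$. Generically, a member's embedding then has a nonzero component orthogonal to $\mathcal S$, so its residual is positive and of the same nature as a non-member's. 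Recoverability therefore fails, and the $p$ embeddings are ``linearly dependent in the gradient space'' in the sense of the statement. Conversely, when $p\le m$ and $p\le n-1$, generic $\bm\alpha$ has full row rank $p$, so $\operatorname{rank}(\bm\alpha^\top\bm X)=p$ and $\operatorname{Span}(\nabla\bm W)=\operatorname{Span}(\bm X)$. This is exactly the case of Eqs.~\eqref{eq:rank_w}--\eqref{eq:Span_x_w}: members get $r=0$ and non-members get $r>0$ almost surely. Taking both conditions together gives $p_{\max}\le\min(n-1,m)$.

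The main obstacle is the second step. We have to make rigorous that $\operatorname{rank}(\bm\alpha)<p$ really destroys member recovery, and not merely that it shrinks $\mathcal S$. A degenerate $\bm\alpha$ could accidentally keep a particular member inside $\mathcal S$. We would handle this with a measure-zero argument. For each fixed member row $\bm x_i$, the set of pairs $(\bm\alpha,\bm X)$ for which $\bm x_i\in\operatorname{rowspace}(\bm\alpha^\top\bm X)$ while $\operatorname{rank}(\bm\alpha)=m<p$ is a proper algebraic subset, hence Lebesgue-null. We would state the theorem as holding almost surely under this genericity assumption. We would also first fix the paper's inconsistent gradient orientation, since the printed Eq.~\eqref{eq:w_gradient} does not type-check with $\bm X\in\mathbb{R}^{p\times n}$.
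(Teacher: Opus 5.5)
Your proposal is correct and follows the paper's own route. Your two failure modes are exactly the paper's two cases. The full-rank span when $p\ge n$ is Case~1 ($m\ge n$), and the $m$-neuron bottleneck forcing $\operatorname{Span}(\nabla\bm W)\subsetneq\operatorname{Span}(\bm X)$ when $p>m$ is Case~2 ($m<n$); you simply index them by which bound is violated rather than by the order of $m$ and $n$.

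Your explicit genericity assumption and corrected gradient orientation make precise what the paper's rank equalities tacitly assume. One point to state explicitly: run the bottleneck step only after $p\le n-1$ is established. For $p>m\ge n$ the gradient span is all of $\mathbb{R}^n$, so members do lie in $\mathcal S$ there, and that sub-case is excluded only by your first step.
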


	\begin{proof}
		According to Eq. \eqref{eq:p_max_condition}, \( p_{\max} \) satisfies the following:
		\begin{equation}
			p_{\max} < n \quad \text{and} \quad p_{\max} \leq m .
			\label{eq:p_max_condition}
		\end{equation}
		Under this condition, Eqs. \eqref{eq:Span_x_w} and \eqref{eq:x_rec} hold naturally, and thus \texttt{ProjRes} is effective. Conversely, suppose there exists some \( p' > p_{\max} \) for which \texttt{ProjRes} remains effective. Depending on the relative magnitudes of \( m \) and \( n \), we consider the following two cases: (1) \( p' > p_{\max} \) and \( m \geq n \); (2) \( p' > p_{\max} \) and \( m < n \).
		
		\para{Case 1.} If \( p' > p_{\max} \) and \( m \geq n \), Eq. \eqref{eq:rank_w} and \eqref{eq:rank_x} can be rewritten as follows
		\begin{equation}
			\operatorname{rank}(\nabla \bm{W}_{n\times m})=\operatorname{min}(n, p', m)=n,
			\label{eq:rank_w_new}
		\end{equation}
		\begin{equation}
			\operatorname{rank}(\bm{X})=\operatorname{min}(n, p')=n.
			\label{eq:rank_x_new}
		\end{equation}
		In this case, although \(\operatorname{Span}(\nabla \bm{W}) = \operatorname{Span}(\bm{X})\) still holds, each vector \(\bm{x}_i \in \bm{X}\) has dimension \(n\), making \(\operatorname{Span}(\bm{X})\) a full-rank subspace. Consequently, its representational capacity is no longer confined to a low-dimensional subspace spanned by a limited basis, which renders \texttt{ProjRes} ineffective.
		
		\para{Case 2.} If \( p' > p_{\max} \) and \( m < n \), Eq. \eqref{eq:rank_w} and \eqref{eq:rank_x} can be rewritten as follows
		\begin{equation}
			\operatorname{rank}(\nabla \bm{W}_{n\times m})=\operatorname{min}(n, p', m)=m,
			\label{eq:rank_w_new2}
		\end{equation}
		\begin{equation}
			\operatorname{rank}(\bm{X})=\operatorname{min}(n, p') \neq m.
			\label{eq:rank_x_new2}
		\end{equation}
		In this case, \(\operatorname{Span}(\nabla \bm{W}_{nm}) \neq \operatorname{Span}(\bm{X})\), as the limited number \(m\) of neurons is insufficient to fully capture the information contained in up to \(p'\) hidden embedding vectors, rendering \texttt{ProjRes} ineffective. Combining both cases, no \(p' > p_{\max}\) exists for which \texttt{ProjRes} remains effective. Therefore, the maximum feasible value of \(p\) is precisely \(p_{\max}\).
		
	\end{proof}
	
	\subsection{Impact of Upsampling}
	In LLMs, dense layers commonly expand their dimensionality through upsampling. For instance, in BERT-Base, the dense layer in the output module has an input dimension of 3072 after upsampling, that is four times the hidden embedding dimension of 768. Similarly, in GPT-Large, the \texttt{c\_fc} layer within the MLP block expands to an input dimension of 5120, which is four times its hidden embedding dimension of 1280. Given that such dimensional expansion may enlarge the attack surface, this section investigates whether the increased dimensions of the input and output layers, induced by upsampling, enhance the attack capability of \texttt{ProjRes}. Let the input to this module before up-projection remain denoted as \( \bm{X} \), and let the output of the up-projection layer be \( \bm{X}^{\text{up}} \). Then, we have:
	\begin{equation}
		\bm{X}_{\text{up}}=\bm{W}_{\text{up}}\bm{X},
		\label{eq:up-projection}
	\end{equation}
	where $ \bm{W}_{\text{up}} $ denotes the weight matrix of the up-projection layer, with dimensions $ n_{\text{up}} \times n $, and \( n_{\text{up}} > n \). The resulting up-projected output \( \bm{X}_{\text{up}} \) thus has dimensions \( n_{\text{up}} \times p \). The ranks of \( \bm{X}_{\text{up}} \), \( \bm{W}_{\text{up}} \), and \( \bm{X} \) satisfy the following relationship:
	\begin{equation}
		\begin{split}
			\operatorname{rank}(\bm{X}_{\text{up}})= \operatorname{rank}(\bm{W}_{\text{up}} \bm{X})
			&= \min(n_{\text{up}}, p, n) \\
			&= \min(p, n) \\
			&= \operatorname{rank}(\bm{X}) \\
		\end{split}.
		\label{eq:rank_n_up}
	\end{equation}
	As shown in Eq. \eqref{eq:rank_n_up}, the effective information contained in the up-projected hidden embeddings $ \bm{X}_{\text{up}} $ is still determined by the original hidden embeddings $ \bm{X} $. Consequently, under the \texttt{ProjRes} attack, the amount of information leaked through model gradients is fundamentally governed solely by the dimensionality of the original hidden embeddings.
	
	\section{Full Description of \texttt{ProjRes} Algorithm}
	\label{full_description}
	
	\begin{algorithm}[!t]
		\caption{Projection Residual-based MIA}
		\label{alg:P-MIA}
		\begin{algorithmic}[1] 
			\Require \\
			Global model $f_\theta$; \\
			Client shared gradient $\nabla \theta$; \\
			Threshold $\tau$; \\
			Target sample $\bm x$.
			\Ensure
			Membership inference.
			
			\State Compute the hidden embedding vector $f(\bm x)$.
			\State Construct the low-rank linear subspace: 
			
			$$\mathcal{S} = \operatorname{Span}(\nabla \bm{W})$$
			
			\State Reconstruct the embedding vector:
			
			$$f(\bm x)^{\text{rec}} = \Pi_{\mathcal{S}}\big( f(\bm x) \big)$$
			
			\State Compute the residual $r$:
			\[
			r = \big\| f(\bm x) - f(\bm x)^{\text{rec}} \big\|_1
			\]
			\State Conduct membership inference:
			$$CF(\bm x,f(\bm x)^{\mathrm{rec}}) =
			\begin{cases}
				1, & \text{if }  r < \tau, \\
				0, & \text{otherwise},
			\end{cases}	.$$
			\Return Membership
		\end{algorithmic}
	\end{algorithm}
	
	Although we have given the necessary steps for the \texttt{ProjRes} implementation in Algorithm~\ref{alg:P-MIA}, the steps for reconstructing the embedding vector $f(\bm x)^{\text{rec}}$ still need further refinement. Therefore, we provide a detailed supplement here. First, reconstructing the embedding vector \(f(\bm x)\) in the vector space $\mathcal{S}$ is essentially equivalent to solving for a set of coefficients \(\bm{\alpha}\), which requires satisfying the following conditions: $f(\bm x) = \mathcal{S} \cdot \bm{\alpha}$.	Since the linear space \(\mathcal{S} \in \mathbb{R}^{m\times n}\) has rank properties, these properties determine the existence and uniqueness of the coefficient vector \(\boldsymbol{\alpha}\). Specifically, we have the following setps:
	
	\textit{Step 1: QR Decomposition.} We conduct QR decomposition for $\mathcal{S}$, thus, we have:$\mathcal{S} = \bm{Q}\bm{R}$, where \( \bm{Q} \in \mathbb{R}^{d \times k} \) is a column-orthogonal matrix (\( \bm{Q}^\top \bm{Q} = \bm{I}_k \)), \( \bm{Q}\bm{R} \in \mathbb{R}^{k \times m} \) is an upper triangular matrix.
	
	\textit{Step 2: Projection and Triangular Solution.} Substitute the original equation into the QR decomposition: $\bm Q\bm{R}\boldsymbol{\alpha} = f(\bm x).$ Left-multiply by $ \bm Q^\top $ (using $ \bm Q^\top \bm Q = \bm{I} $), thus, we have: $\bm{R}\boldsymbol{\alpha} = 	\bm Q^\top f(\bm x).$ Since $\bm{R}$ is an upper triangular matrix, $ f(\bm x) $ can be efficiently solved using back substitution, \ie
	\begin{equation}
		\boldsymbol{\alpha} = \bm{R}^{-1}(\bm Q^\top f(\bm x))
	\end{equation}
	
	\textit{Step 3: Reconstruct the Target Vector.} After obtaining the coefficient vector $ \bm{\alpha} $, the reconstructed vector is given by:
	\begin{equation}
		f(\bm x)^{\text{rec}} = \mathcal{S} \cdot \boldsymbol{\alpha} = \bm{Q}\bm{R} \bm{R}^{-1}(\bm{Q}^\top f(\bm x))
	\end{equation}
	where $ f(\bm x)^{\text{rec}} $ is is the orthogonal projection of $ f(\bm x) $ onto this subspace $ \mathcal{S} $. 
	
	\para{Remark.} We emphasize two key points. First, the use of QR decomposition is merely a practical choice for computing the projection of $f(\bm x)$ onto the linear subspace $\mathcal{S}$; fundamentally, this remains a well-studied algebraic problem with multiple possible solutions. Second, in this work, we adopt QR decomposition to obtain the reconstructed vector, where the primary computational cost arises from the decomposition step itself. While optimizing this step could further enhance the overall efficiency of \texttt{ProjRes}, computational efficiency is not the main focus of this paper, and we therefore leave such optimizations for future work.

	\section{Supplementary Experimental Results}\label{supp}
In the main experiments, we evaluated \texttt{ProjRes} on the CoLA dataset, analyzing its performance across training epochs (\S\ref{sub_AUC_epoch}) and different trainable module placements (\S\ref{layer_auc}). While these results demonstrate \texttt{ProjRes}'s robustness, CoLA's simplicity leads to consistently high attack performance, limiting the ability to observe meaningful variations. To better capture performance dynamics, we conduct additional experiments on the Yelp dataset, which exhibits more pronounced fluctuations in \texttt{ProjRes}'s effectiveness (as shown in Fig.~\ref{fig:roc_comparison}) and thus serves as a more revealing benchmark. We further provide a supplementary evaluation of DP and GP defense performance across training stages.
	
	\para{Impact of Training Stages.} 
	\label{c1}
Fig.~\ref{AUC_epoch} shows the performance of \texttt{ProjRes} against an Adapter-based BERT-Base on Yelp across training epochs, revealing a gradual decline starting around epoch 20. However, this degradation may stem not only from overfitting but also from evolving neuron activation patterns within the Adapter. To isolate this effect, we present results on LoRA-based FedLLMs in Fig.~\ref{AUC_epoch_yelp_Lora}, which support our hypothesis: the attack performance of \texttt{ProjRes} is closely tied to the representational capacity of the gradient-based space $\mathcal{S}$, relative to the number of input hidden embeddings. When the number of active neurons significantly exceeds the embedding dimension, $\mathcal{S}$ remains rich and stable, yielding consistent attack performance. In contrast, when capacity is constrained, prolonged training increases activation sparsity—leading to neuron deactivation and a mild decline in \texttt{ProjRes} effectiveness. Thus, the interplay between model activation states and input scale critically influences the attack dynamics.
	
	\begin{figure}[!t]
		\centering
		\includegraphics[width=0.3\textwidth]{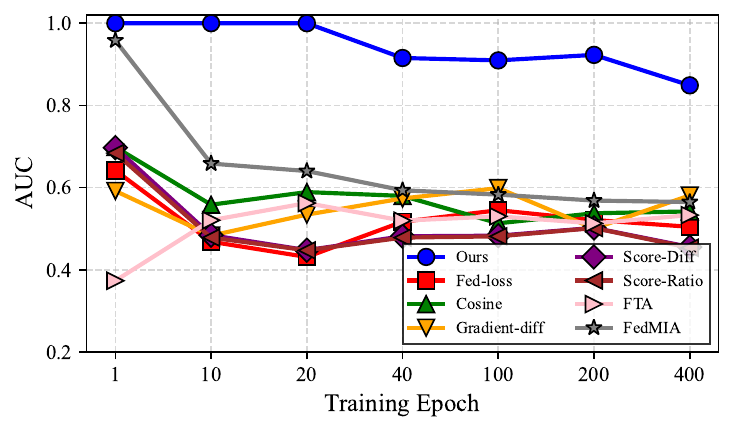} %
		\caption{AUC scores of MIAs agaist Adapter-based BERT-Base on Yelp across different training epochs.}
		\label{AUC_epoch_yelp}
		\vspace{-0.4cm}
	\end{figure} 
	
	\begin{figure}[!t]
		\centering
		\includegraphics[width=0.3\textwidth]{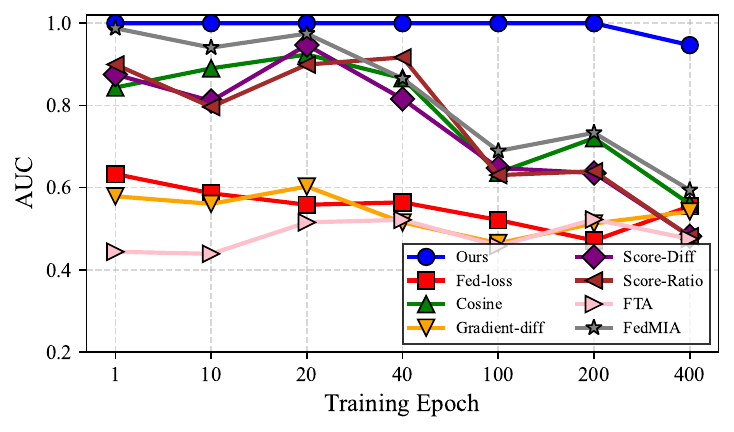} %
		\caption{AUC scores of MIAs agaist LoRA-based BERT-Base on Yelp across different training epochs.}
		\label{AUC_epoch_yelp_Lora}
			\vspace{-0.4cm}
	\end{figure} 
	
	
	\para{Impact of Trainable Layer Positions} \label{c2}
	Fig. \ref{AUC_layer_yelp} presents the evaluation results of \texttt{ProjRes} against BERT-Base with a single Adapter in different position on Yelp, it showes that \texttt{ProjRes} achieves an AUC of 1.0 for attacks targeting adapters inserted in all Transformer layers except the 11-th, where a slight performance drop is observed. This finding is largely consistent with the main conclusion presented in the main text and further corroborates that \texttt{ProjRes} is insensitive to the placement of trainable modules.
	
	\begin{figure}[!t]
		\centering
		\includegraphics[width=0.3\textwidth]{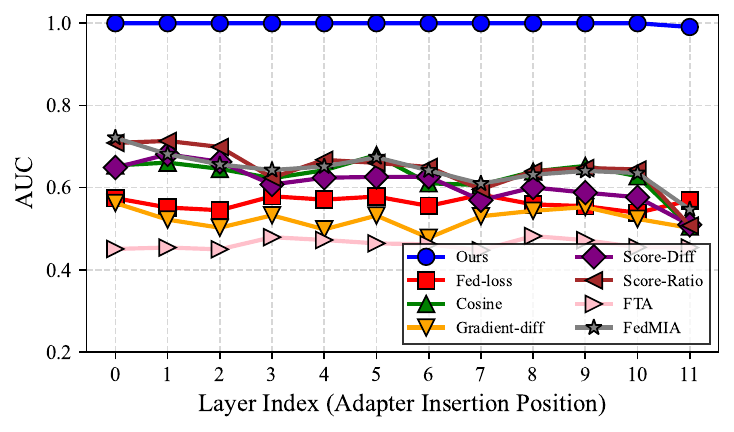} %
		\caption{The AUC scores of \texttt{ProjRes} against BERT-Base with a single adapter inserted at different Transformer layers.}
		\label{AUC_layer_yelp}
			\vspace{-0.2cm}
	\end{figure}

		\begin{table}[!t]
		\renewcommand{\arraystretch}{1.2}
		\setlength{\tabcolsep}{4pt}
		\centering
		\caption{AUC scores of \texttt{ProjRes} against Adapter-based BERT-Base under various defenses on CoLA after 1 epoch.}
		\begin{threeparttable}
			\begin{tabular}{llcccccc} \hline
				\multirow{2}{*}{\textbf{Defense}} & \multirow{2}{*}{\textbf{Value}} & \multicolumn{5}{c}{\textbf{Batch Size}}  \\ \cmidrule(lr){3-7}
				&                 & 1              & 2              & 4            & 8             & 16 \\ \hline
				\multirow{4}{*}{\textbf{DP ($\sigma$)}}  & $ 0.01$ & 1.000          & 1.000          & 1.000        & 0.999         & 0.848  \\
				& $0.1$  & 1.000          & 0.911          & 0.663        & 0.565         & 0.537  \\
				& $ 1$    & 0.495          & 0.502          & 0.514        & 0.491         & 0.510  \\
				& $1.5$  & 0.520          & 0.519          & 0.499        & 0.507         & 0.517  \\ \midrule
				\multirow{4}{*}{\textbf{GP ($\beta$)}}  & $70\%$      & 1.000          & 1.000          & 0.995        & 0.975         & 0.772    \\
				& $90\%$      & 1.000          & 0.922          & 0.766        & 0.629         & 0.544    \\
				& $99\%$      & 0.999          & 0.740          & 0.630        & 0.531         & 0.508    \\
				& $99.9\%$    & 0.515          & 0.542          & 0.530        & 0.552         & 0.503    \\\hline
			\end{tabular}
		\end{threeparttable}
		\label{DP_GP_epoch_1}
		\vspace{-0.2cm}
	\end{table}
	
	\begin{table}[!t]
		\renewcommand{\arraystretch}{1.2}
		\setlength{\tabcolsep}{4pt}
		\centering
		\caption{AUC scores of \texttt{ProjRes} against Adapter-based BERT-Base under various defenses on CoLA after 20 epoches.}
		\begin{threeparttable}
			\begin{tabular}{llcccccc} \hline
				\multirow{2}{*}{\textbf{Defense}} & \multirow{2}{*}{\textbf{Value}} & \multicolumn{5}{c}{\textbf{Batch Size}}  \\ \cmidrule(lr){3-7}
				&                 & 1              & 2              & 4            & 8             & 16 \\ \hline
				\multirow{4}{*}{\textbf{DP ($\sigma$)}}  & $ 0.01$ & 0.760          & 0.633          & 0.972        & 0.973         & 0.828  \\
				& $ 0.1$  & 0.550          & 0.591          & 0.565        & 0.574         & 0.490  \\
				& $ 1$    & 0.508          & 0.502          & 0.467        & 0.490         & 0.480  \\
				& $1.5$  & 0.556          & 0.512          & 0.510        & 0.476         & 0.504  \\ \midrule
				\multirow{4}{*}{\textbf{GP ($\beta$)}}  & $70\%$      & 1.000          & 0.859          & 0.819        & 0.814         & 0.754    \\
				& $90\%$      & 0.998          & 0.786          & 0.697        & 0.616         & 0.563    \\
				& $99\%$      & 0.941          & 0.782          & 0.675        & 0.627         & 0.599    \\
				& $ 99.9\%$    & 0.798          & 0.761          & 0.673        & 0.612         & 0.528    \\\hline
			\end{tabular}
		\end{threeparttable}
		\label{DP_GP_epoch_20}
		\vspace{-0.2cm}
	\end{table}

	\para{Attack Performance Evaluation under Active Defense.} \label{c3} The supplementary experiments in this section aim to examine the defensive effectiveness of DP and GP mechanisms of equal strength against \texttt{ProjRes} attacks at different stages of training. A comparison of the results in Tables~\ref{DP_GP_epoch_1}, \ref{DP_GP_epoch_20}, and \ref{tab:attack_under_DP_GP} (in \S\ref{Attack_defense}) reveals that DP and GP with the same noise magnitude offer weaker defense in the early training stages compared to later ones. For example, when the noise scale is set to \(\sigma = 0.01\), \texttt{ProjRes} achieves significantly higher attack performance after epoch~1 than after epoch~10.


\end{document}